\DeclareMathOperator{\sd}{sd}
\DeclareMathOperator{\dist}{dist}
\DeclareMathOperator{\occ}{occ}
\DeclareMathOperator{\vel}{vel}
\newcommand{\Rbb}{\ensuremath{\mathbb{R}}}
\newcommand{\abs}[1]{\ensuremath{\left|{#1}\right|}}
\newcommand{\smid}{\ensuremath{ \, | \,}}
\newcommand{\tb}{\ensuremath{t_{M}}}
\newcommand{\td}{\ensuremath{t_{D}}}
\newcommand{\dt}{\ensuremath{\Delta t}}
\newcommand{\cb}{\ensuremath{\boldsymbol{c}}}
\newcommand{\db}{\ensuremath{\boldsymbol{d}}}
\newcommand{\nb}{\ensuremath{\boldsymbol{n}}}
\newcommand{\qb}{\ensuremath{\boldsymbol{q}}}
\newcommand{\rb}{\ensuremath{\boldsymbol{r}}}
\newcommand{\ub}{\ensuremath{\boldsymbol{u}}}
\newcommand{\wb}{\ensuremath{\boldsymbol{w}}}
\newcommand{\xb}{\ensuremath{\boldsymbol{x}}}
\newcommand{\zb}{\ensuremath{\boldsymbol{z}}}
\newcommand{\bom}{\ensuremath{\boldsymbol{\omega}}}
\newcommand{\Jb}{\ensuremath{\boldsymbol{J}}}
\newcommand{\Bb}{\ensuremath{\boldsymbol{B}}}
\newcommand{\Rb}{\ensuremath{\boldsymbol{R}}}
\newcommand{\Ib}{\ensuremath{\boldsymbol{I}}}
\newcommand{\Eb}{\ensuremath{\boldsymbol{E}}}
\newcommand{\Nb}{\ensuremath{\boldsymbol{N}}}
\newcommand{\Us}{\ensuremath{\mathcal{U}}}
\newcommand{\Xs}{\ensuremath{\mathcal{X}}}
\newcommand{\Ws}{\ensuremath{\mathcal{W}}}
\newcommand{\chix}[1]{\ensuremath{\chi_{\text{#1}}}}
\newcommand{\phix}[1]{\ensuremath{\Phi_{\text{#1}}}}
\newcommand{\uphix}[1]{\ensuremath{\ub_{\phix{#1}}}}
\newcommand{\pb}{\ensuremath{\boldsymbol{p}}}
\newcommand{\vb}{\ensuremath{\boldsymbol{v}}}
\newcommand{\rob}[1]{\ensuremath{{#1}^{r}}}
\newcommand{\robi}[1]{\ensuremath{\rob{{#1}}_{i}}}
\newcommand{\robin}[2]{\ensuremath{\rob{{#1}}_{i, #2}}}
\newcommand{\robjn}[2]{\ensuremath{\rob{{#1}}_{j, #2}}}
\newcommand{\robitop}[1]{\ensuremath{{#1}^{r \top}_{i}}}
\newcommand{\robl}[1]{\ensuremath{\rob{{#1}}_{l}}}
\newcommand{\Vrobi}[2]{\ensuremath{\robi{\mathcal{V}}(\left[#1, #2\right])}}
\newcommand{\Orobia}{\ensuremath{\robi{\mathcal{O}}}}
\newcommand{\Orobla}{\ensuremath{\robl{\mathcal{O}}}}
\newcommand{\robisimp}[1]{\ensuremath{{#1}_{i}}}
\newcommand{\hum}[1]{\ensuremath{{#1}^{h}}}
\newcommand{\humj}[1]{\ensuremath{\hum{{#1}}_{j}}}
\newcommand{\humjb}[1]{\ensuremath{\hum{{#1}}_{j'}}}
\newcommand{\humjc}[1]{\ensuremath{\hum{{#1}}_{\tilde{j}}}}
\newcommand{\Ohumja}{\ensuremath{\humj{\mathcal{O}}}}
\newcommand{\Ohumjb}{\ensuremath{\humjb{\mathcal{O}}}}
\newcommand{\Ohumj}[2]{\ensuremath{\humj{\mathcal{O}}(\left[#1, #2\right])}}
\newcommand{\env}[1]{\ensuremath{{#1}^{e}}}
\newcommand{\envk}[1]{\ensuremath{\env{{#1}}_{k}}}
\newcommand{\Oenvk}{\ensuremath{\envk{\mathcal{O}}}}
\newcommand{\body}[1]{\ensuremath{{#1}^{b}}}
\newcommand{\bodyn}[1]{\ensuremath{\body{{#1}}_{n}}}
\newcommand{\Obodyn}{\ensuremath{\bodyn{\mathcal{O}}}}
\newcommand{\Bs}{\ensuremath{\mathcal{B}}}
\newcommand{\Hb}{\ensuremath{\mathbb{H}}}
\newcommand{\Hs}{\ensuremath{\mathcal{H}}}
\newcommand{\Rs}{\ensuremath{\mathcal{R}}}
\newcommand{\Es}{\ensuremath{\mathcal{E}}}
\newcommand{\norm}[1]{\ensuremath{\|{#1}\|_2}}
\newcommand{\dqmax}[1]{\ensuremath{\hat{\dot{q}}_{#1}}}
\newcommand{\ddqmax}[1]{\ensuremath{\hat{\ddot{q}}_{#1}}}
\newcommand{\dddqmax}[1]{\ensuremath{\hat{\dddot{q}}_{#1}}}
\newcommand{\ommax}[1]{\ensuremath{\hat{\omega}_{#1}}}
\newcommand{\dommax}[1]{\ensuremath{\hat{\dot{\omega}}_{#1}}}
\newcommand{\ddommax}[1]{\ensuremath{\hat{\ddot{\omega}}_{#1}}}
\newtheorem{theorem}{Theorem}[section]
\newaliascnt{proposition}{theorem}
\newtheorem{proposition}[proposition]{Proposition}
\crefname{section}{Sec.}{Sections}
\Crefname{section}{Sec.}{Sections}
\newcommand{\shield}{SARA shield}
\newcommand{\pfl}{power and force limiting}
\newcommand{\iso}{ISO/FDIS 10218-2:2024}
\newcommand\footnoteref[1]{\protected@xdef\@thefnmark{\ref{#1}}\@footnotemark}
\newcommand*{\rom}[1]{\text{\expandafter\@slowromancap\romannumeral #1@}}
\newcommand*\circled[1]{
    \textcircled{\raisebox{-0.6pt}{\scriptsize{#1}}}
}
\begin{document}

\title{A General Safety Framework for Autonomous Manipulation in Human Environments}

\ifthenelse{\boolean{anon}}{
	\author{Anonymous authors%
		\thanks{$^{1}$Anonymous authors.
			{\tt\small anonymous authors email}}%
		\thanks{Manuscript received May 12, 2025; revised Month DD, YYYY.}}
}{
	\author{Jakob~Thumm,
		Julian Balletshofer,
		Leonardo Maglanoc,
		Luis Muschal,
		and~Matthias~Althoff%
		\thanks{$^{1}$The authors are with the School of Computation, Information and Technology, Technical University of Munich, 85748 Garching, Germany.
			{\tt\small \{jakob.thumm, julian.balletshofer, althoff\}@tum.de}}%
		\ifthenelse{\boolean{arxiv}}{}{
			\thanks{Manuscript received Month DD, YYYY; revised Month DD, YYYY.}
		}
		}
}

\ifthenelse{\boolean{arxiv}}{
	\markboth{}
}{
	\markboth{IEEE Transactions on Robotics,~Vol.~XX, No.~Y, Month~YYYY}
}
{\ifthenelse{\boolean{anon}}{Anonymous}{Thumm} \MakeLowercase{\textit{et al.}}: A General Safety Framework for Autonomous Manipulation in Human Environments.}

\maketitle

\begin{abstract}
Autonomous robots are projected to significantly augment the manual workforce, especially in repetitive and hazardous tasks.
For a successful deployment of such robots in human environments, it is crucial to guarantee human safety.
State-of-the-art approaches to ensure human safety are either too conservative to permit a natural human-robot collaboration or make strong assumptions that do not hold for autonomous robots, e.g., knowledge of a pre-defined trajectory.
Therefore, we propose the \underline{shield} for \underline{S}afe \underline{A}utonomous human-robot collaboration through \underline{R}eachability \underline{A}nalysis (\shield{}).
This novel \pfl{} framework provides formal safety guarantees for manipulation in human environments while realizing fast robot speeds.
As unconstrained contacts allow for significantly higher contact forces than constrained contacts (also known as clamping), we use reachability analysis to classify potential contacts by their type in a formally correct way.
For each contact type, we formally verify that the kinetic energy of the robot is below pain and injury thresholds for the respective human body part in contact.
Our experiments show that \shield{} satisfies the contact safety constraints while significantly improving the robot performance in comparison to state-of-the-art approaches.
\end{abstract}

\begin{IEEEkeywords}
Human-robot collaboration, safety, power and force limiting, manipulation, formal methods, constrained contacts, clamping, and autonomous robotics.
\end{IEEEkeywords}

\IEEEpeerreviewmaketitle

\section{Introduction}\label{sec:introduction}
Autonomous robots have already started to replace tedious, strenuous, and dangerous jobs~\cite{trevelyan_2016_RoboticsHazardous, smids_2020_RobotsWorkplace}, and are projected to make up a relevant part of the future workforce~\cite{inkwoodresearch_2021_SizeMarket, statista_2023_IndustryIndepth}.
We require these autonomous robots to safely collaborate with humans while exerting a high level of interactivity.
Previous approaches to ensure human safety in autonomous robotics often lack formal guarantees~\cite{suita_1995_FailuresafetyKyozon, kokkalis_2018_ApproachImplementing, svarny_2019_SafePhysical, lucci_2020_CombiningSpeed, ergun_2021_UnifiedPerceptiona, steinecker_2022_MeanReflected}.
Most formal approaches, however, make assumptions that do not hold for fully autonomous robots, e.g., pre-defined trajectories that prevent human clamping~\cite{haddadin_2012_MakingRobots, beckert_2017_OnlineVerification}, no contacts with the robot links~\cite{haddadin_2012_MakingRobots, liu_2021_OnlineVerification}, or stationary humans~\cite{haddadin_2012_MakingRobots}. 
Formal approaches that do not make these assumptions~\cite{lachner_2021_EnergyBudgets} are often too conservative to enable a seamless human-robot collaboration (HRC).

\iso{} defines two possible forms of contact: \emph{constrained contacts}, where the human is clamped by the robot link, and \emph{unconstrained contacts}, where the human can freely move.
Recent experiments have shown that the safe contact forces for unconstrained contacts are up to \num{20} times higher than for constrained contacts~\cite{kirschner_2021_ExperimentalAnalysis, kirschner_2024_UnconstrainedCollision}.
Each contact comprises two phases: in the \emph{transient} phase (first \SI{0.5}{\second}), peak contact forces have to be limited to minimize the risk of injury; in the subsequent \emph{quasi-static} phase, clamping forces have to remain below the pain onset thresholds.
The quasi-static phase is negligible for unconstrained contacts.

To mitigate the downsides of previous work, we propose the \underline{shield} for \underline{S}afe \underline{A}utonomous human-robot collaboarion through \underline{R}eachability \underline{A}nalysis (\shield{}).
Our novel \pfl{}~\cite{iso_2024_RoboticsSafety} approach formally guarantees human safety in robotic manipulation for (a) dynamically changing robot paths, (b) full human-robot contact, (c) arbitrary human motion, and (d) sharp robot geometries.
Our framework focuses on limiting all transient contact forces and can be extended with any contact reaction scheme mitigating quasi-static forces, e.g.,~\cite{vorndamme_2025_SafeRobot}.
\Cref{fig:overview} provides an overview of the methodology behind \shield{}.
In each control cycle, \shield{} detects all possible contacts between the robot and the human by determining the sets of all possible states the human and robot can occupy in a given time frame using reachability analysis.
Utilizing the same reachability analysis, we categorize the potential contacts into constrained and unconstrained types.
Once a contact is detected and classified, we reduce the speed of the robot so that the kinetic energy of the robot link in contact is below the corresponding thresholds of the detected contact type.
By differentiating the contacts by their type, we do not have to assume that all contacts are constrained and can verify against the less conservative unconstrained contacts in most situations.
As we base \shield{} on the previous work on provably safe manipulation for reinforcement learning-based HRC~\cite{thumm_2022_ProvablySafe}, we can directly deploy \shield{} on autonomous robots.

\begin{figure*}[t]
	\centering
	\includegraphics[width=0.99\textwidth]{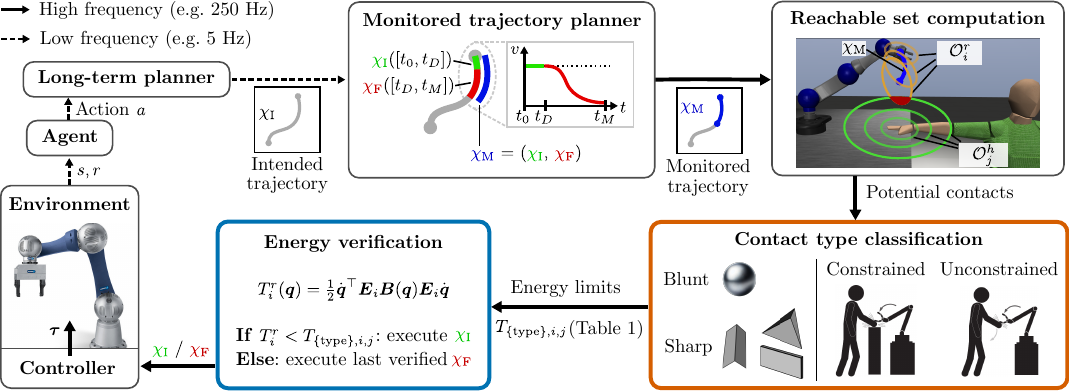}
	\caption{Overview of the SaRA-shield safety framework. 
    An action is translated into an intended trajectory. 
    In each control cycle, SaRA shield computes a monitored trajectory consisting of an intended trajectory followed by a failsafe trajectory. 
    We check the monitored trajectory for possible human contacts using reachability analysis. 
    When a potential contact with the human is detected, we classify the contact into the constrained or unconstrained type. 
    We verify if the kinetic energy of the robot upon contact is below pain and injury limits for the given contact type. 
    If the verification is successful, we execute the intended trajectory, and if it fails, we execute the last successfully verified failsafe trajectory.
    }
	\label{fig:overview}
\end{figure*}

Our core contributions are
\begin{itemize}
    \item a formal safety framework for manipulators in human environments including,
    \item a reachability-based clamping detection, and
    \item an injury prevention based on admissible contact energies.%
\end{itemize}

The main claims following our experimental results are that 
\begin{itemize}
    \item \shield{} significantly improves the robot performance in comparison to previous safe HRC approaches, and %
    \item the kinetic energy of a robot controlled with \shield{} during a contact with a human is below injury thresholds.
\end{itemize}

This article is structured as follows: \cref{sec:related_work} discusses the related work. 
We introduce the notation of our reachability analysis, the foundation of our safety mechanism in~\cref{sec:preliminaries}, and the problem statement in \cref{sec:problem_statement}. 
\cref{sec:safety_framework} provides a high-level overview of our proposed safety framework. 
We discuss our proposed contact classification in~\cref{sec:constrained_contact} and our energy-based verification in~\cref{sec:energy_constraints}. 
Finally, we evaluate our claims in~\cref{sec:experiments} and provide a conclusion in~\cref{sec:conclusion}.
\section{Related Work}\label{sec:related_work}
This section introduces the notion of (a) speed and separation monitoring, (b) \pfl{}, and (c) discusses relevant methodologies implementing them.
We subsequently discuss recent human injury and pain onset studies and their implications for safety in HRC.

\subsection{Speed and Separation Monitoring}\label{sec:related_work_ssm}
In speed and separation monitoring~\cite{iso_2024_RoboticsSafety}, the robot is stopped before a human can reach it.
Industrial applications usually implement speed and separation monitoring by enclosing the workspace of the robot with a light curtain that stops the robot once it is crossed.
The safety distance between the robot and the human can be decreased when a variable braking time of the robot and a limited human speed is assumed~\cite{marvel_2017_ImplementingSpeed, lacevic_2020_ExactSolution}.
When the human position can be continuously measured, the robot can resume its operation once the human retreats far enough from its workspace.
The authors in~\cite{beckert_2017_OnlineVerification, pereira_2018_OverapproximativeHuman, althoff_2019_EffortlessCreation, schepp_2022_SaRATool, thumm_2022_ProvablySafe} propose a formal speed and separation monitoring concept by over-approximating the reachable sets of the robot and surrounding humans.
These approaches allow the robot to continue operation even if the human is relatively close ($\approx$ \SI{0.2}{\meter}), assuming continuous measurements of the human pose are available.

\subsection{Power and Force Limiting}\label{sec:related_work_pfl}
In many HRC scenarios, we want to allow or even embrace contact with the human partner.
\iso{} permits such contacts using \pfl{}, where the robot is allowed to have a non-zero speed at contact as long as transient and quasi-static force, pressure, or energy limits are not violated.
It is important to note that not every body part can endure the same contact forces, as we discuss in~\cref{sec:related_work_human_injuries}.
Therefore, methods that do not assume knowledge about the human pose have to assume the worst case and tend to be conservative in their maximal allowed robot speed.

First approaches implementing \pfl{} proposed to stop the robot once a contact is detected based on an external force estimation from motor current differences~\cite{suita_1995_FailuresafetyKyozon, kokkalis_2018_ApproachImplementing, vorndamme_2025_SafeRobot}.
However, as shown in~\cite{haddadin_2009_RequirementsSafe, kirschner_2021_ExperimentalAnalysis}, the reaction time of such approaches is too slow to significantly mitigate the initial transient contact force spikes. 
Hence, these methods are mostly relevant for reducing contact forces in the quasi-static phase.

\iso{} defines a basic \pfl{} method that approximates the robot mass acting on the contact as half of the total robot mass and calculates the maximal contact forces based on the kinetic energy of the robot.
Although it is an easy-to-use model, \citet{kirschner_2021_NotionCorrect} have shown that this robot mass calculation is incorrect, and the reflected robot mass~\cite{khatib_1995_InertialProperties} should be used instead.
\iso{} also specifies a reduced-speed mode, where any contact is considered safe while the robot speed is below \SI{0.25}{\meter \per \second}.
However, this approach violates contact force thresholds for large robot masses~\cite{kirschner_2021_ExperimentalAnalysis} and only permits blunt robot geometries.
Hence, we argue that the simple safety measures outlined in \iso{} do not provide formal safety guarantees for all collaborative robots.

\citet{haddadin_2012_MakingRobots} propose a \pfl{} approach based on the reflected robot mass, where they assume (a) knowledge of the human pose, (b) that the human is stationary, and (c) the critical contact point is known.
Using these assumptions, they determine the contact normal between the robot and the human, along which they calculate the reflected robot mass.
\citet{haddadin_2012_MakingRobots} are also the first to propose the direct use of an injury database to determine a so-called safe motion unit, which returns the maximal allowed speed of the robot given the reflected robot mass.
They argue that this leads to a more accurate representation of the injury cases than the usually used transient force limits.
Finally, they scale the speed of the robot to the maximal allowed value in each control cycle.
The major downside of this approach is the assumed knowledge of the contact normal, which does not hold in reality.

To overcome this limitation, \citet{steinecker_2022_MeanReflected} propose to use the mean reflected mass over all possible contact normals as a mass metric. 
However, this approximation cannot provide formal safety guarantees.
Instead of verifying human safety based on the reflected mass, recent works~\cite{lachner_2021_EnergyBudgets, benzi_2023_EnergyTankbased} propose to use an energy tank-based control to limit the total kinetic energy of the robot. 
However, these approaches do not take the human pose and contact type into account and thus have conservative energy thresholds.

A downside of most \pfl{} approaches~\cite{suita_1995_FailuresafetyKyozon, haddadin_2012_MakingRobots, kokkalis_2018_ApproachImplementing, sloth_2018_ComputationSafe, aivaliotis_2019_PowerForce, lachner_2021_EnergyBudgets, steinecker_2022_MeanReflected, golshani_2023_ControlDesign, hamad_2023_ModularizeconquerGeneralized, pupa_2024_TimeOptimalEnergy} is the assumption that a contact is always possible, leading to a reduced speed of the robot even if the human is far away.
To overcome this, the authors in~\cite{svarny_2019_SafePhysical, lucci_2020_CombiningSpeed} propose to switch between \pfl{} and speed and separation monitoring based on the distance between the human and the robot.
The approach in~\cite{beckert_2017_OnlineVerification} extends this idea by dynamically reducing the speed of the robot if a possible contact is detected using reachability analysis.
\citet{liu_2021_OnlineVerification} further improve on this by replacing the otherwise fixed maximal contact speed with a dynamic one that is verified using model conformance checking.
Despite promising results, their approach is only applicable to end-effector contacts and requires human contact measurements for each new robot, body part, and a wide range of participants.

\subsection{Human Injuries}\label{sec:related_work_human_injuries}
Human pain onset and skin damage depend on the impact energy, contact type, and contact shape~\cite{eisenmenger_2004_SpitzeScharfe, povse_2010_CorrelationImpactenergy, haddadin_2012_MakingRobots, behrens_2014_StudyMeaningful, behrens_2023_StatisticalModel}.
The most researched contact situations are constrained contacts with blunt impactors that evaluate the subjective onset of pain~\cite{yamada_1996_FailsafeHuman, behrens_2014_StudyMeaningful, behrens_2023_StatisticalModel, asad_2023_BiomechanicalModeling}.
\iso{} defines force, pressure, and energy limits for constrained contacts, which seem to be confirmed by the aforementioned experimental studies.
However, there remains a large variance in the study outcomes.
When it comes to sharp or edged contacts in constrained contacts, the ex-vitro study of~\citet{kirschner_2024_SafeRobot} shows that the ISO-defined limits are too high to prevent skin damage.
A subsequent study for unconstrained contacts~\cite{kirschner_2024_UnconstrainedCollision} showed that critical skin injuries occur at \num{5}--\num{20} times higher impact velocities than in the constrained case.
These studies show that sharp contact geometries should be taken into account and that if no contact classification is performed, safety mechanisms have to assume the more conservative constrained contact type at all times.

\section{Preliminaries and Problem Statement}\label{sec:preliminaries}
This section introduces reachability analysis, the safety shield concept we are building upon, our problem statement, and the assumptions we make.

\subsection{Reachability Analysis}
We adopt a reachability analysis approach for safety verification.
Our considered systems follow the dynamics $\dot{\xb}(t)=f(\xb(t), \ub(t), \wb(t))$, with time $t \in \Rbb_0^+$, state $\xb(t) \in \Rbb^n$, input $\ub(t) \in \Us \subset \Rbb^m$, and disturbance $\wb(t) \in \Ws \subset \Rbb^p$, where $\mathcal{U}$ and $\mathcal{W}$ are compact sets.
For the given initial state $\xb_0$, input trajectory $\ub(\cdot)$\footnote{Note that $\ub(\cdot)$ refers to the input trajectory and $\ub(t)$ refers to the input at time $t$.}, and disturbance trajectory $\wb(\cdot)$, we denote the unique solution of the system dynamics as {$\chi\left(t; \xb_0, \ub(\cdot), \wb(\cdot)\right) \in \Rbb^n$}.
The exact reachable set of the system at time $t$ can be defined for a set of initial states $\Xs_0$ as
\begin{align}
    \mathcal{X}^e(t) = & \big\{ \chi\left(t; \xb_0, \ub(\cdot), \wb(\cdot)\right) \mid \\
    & \quad \xb_0 \in \Xs_0, \forall \tau \in [t_0, t]: \ub(\tau) \in \Us, \wb(\tau) \in \Ws \big\} \nonumber.
\end{align}
In general, the exact reachable set $\mathcal{X}^e(t)$ cannot be computed~\cite{gan_2018_ReachabilityAnalysis}, so we compute over-approximations for particular points in time {$\mathcal{X}(t) \supseteq \mathcal{X}^e(t)$}, which we simply refer to as the reachable set for simplicity.
All states reachable during the time interval $[t_a, t_b]$ are denoted by {$\mathcal{X}\left(\left[t_a, t_b\right]\right)=\bigcup_{t \in\left[t_a, t_b\right]} \mathcal{X}(t)$}.

Let us introduce the operator $\occ()$ returning the possible occupancies of a system for a given reachable set.
We denote the set of points that a system can occupy in Euclidean space at time $t$ and the time interval $[t_a, t_b]$ as {$\mathcal{O}(t) = \occ(\mathcal{X}(t))$} and {$\mathcal{O}(\left[t_a, t_b\right]) = \occ(\mathcal{X}(\left[t_a, t_b\right]))$}, respectively, and further refer to them as reachable occupancies.
Similarly, we define the reachable velocity as the set of linear velocities that any point of the system can have in the time interval $\left[t_a, t_b\right]$ as {$\mathcal{V}(\left[t_a, t_b\right]) = \vel(\mathcal{X}(\left[t_a, t_b\right]))$}, where $\vel : \mathcal{P}(\Rbb^n) \rightarrow \mathcal{P}(\Rbb^3)$ extracts all possible linear velocities from a reachable set and $\mathcal{P}$ is the power set.
We denote the set of all human body parts in the scene as $\Hs = \left\{h_1, \dots, h_M\right\}$ and the set of robot links as $\Rs = \left\{r_1, \dots, r_N\right\}$.
A point in Euclidean space is denoted as $\pb$,
and a capsule is defined by two points $\pb_1 \in \Rbb^3$ and $\pb_2 \in \Rbb^3$ and a radius $r \in \Rbb^+$ with {$\mathcal{C}(\pb_1, \pb_2, r) = \overline{\pb_1 \pb_2} \oplus \Bs(\boldsymbol{0}, r)$}, where {$\overline{\pb_1 \pb_2}$} is the line segment between $\pb_1$ and $\pb_2$, $\Bs(\cb, r)$ is a ball with center $\cb$ and radius $r$, the symbol $\oplus$ represents the Minkowski sum, and $\boldsymbol{0}$ is the vector of zeros.
A polytope is defined in the halfspace representation as ${\Hb(\Nb, \db) = \left\{\pb \in \Rbb^3 \smid \Nb \pb \le \db, \Nb \in \Rbb^{H \times 3}, \db \in \Rbb^{H}\right\}}$ {\cite[Sec. 2.5]{althoff_2015_IntroductionCORA}}.
The operations $\Nb[h]$ and $\db[h]$ index the $h$-th row vector of the matrix $\Nb$ and the $h$-th entry of the vector $\db$, respectively.
Finally, we define the signed distance function between a point $\pb$ and a set $\mathcal{S}$ as 
\begin{align}
	\sd(\pb, \mathcal{S}) = 
	\begin{cases}
		- \underset{\pb_s \in \partial \mathcal{S}}{\inf} \norm{\pb - \pb_s}, \ \pb \in \mathcal{S} \\
		\underset{\pb_s \in \partial \mathcal{S}}{\inf} \norm{\pb - \pb_s}, \ \pb \notin \mathcal{S} \, .
	\end{cases}
\end{align}

\subsection{Safety Shield}\label{sec:safety_shield}
To ensure the safety of robots controlled by autonomous agents, we adapt the safety shield for manipulators proposed in~\cite{thumm_2022_ProvablySafe}.
The safety shield relies on the existence of a set of invariably safe states that can be reached from any state.
\citet{thumm_2022_ProvablySafe} define the set of invariably safe states as a fully stopped robot for manipulators in accordance with the speed and separation monitoring formulation in \iso{}.
\citet{thumm_2022_ProvablySafe} further demonstrate that a safety shield for autonomous agents is less conservative when it operates on a higher frequency than the output frequency of the agent.
Therefore, they convert each action of the agent $\boldsymbol{a}$ into a set of goal states $\mathcal{X}_g(\boldsymbol{a})$, and aim to steer the robot towards this set for $L$ time steps, while performing a safety shield update at every time step.

In each safety shield update, \citet{thumm_2022_ProvablySafe} compute an intended and a failsafe trajectory.
Without loss of generality, we always reset the clock to $t_0=0$ at each time step.
The intended trajectory $\chix{I}$ steers the robot towards $\mathcal{X}_g(\boldsymbol{a})$ using the output $\uphix{I}(\xb, \mathcal{X}_g(\boldsymbol{a}))$ of the intended controller for $D$ time steps; in this work, we use $D=1$.
The failsafe trajectory $\chix{F}$ steers the robot to an invariably safe state using the output $\uphix{F}(\xb, \mathcal{X}_g(\boldsymbol{a}))$ of the failsafe controller for $k_\text{F}$ time steps ending at time $\tb$.
Their failsafe controller performs a braking maneuver that is path-consistent with the intended trajectory, which prevents the robot from deviating from the set of goal states.
The shield appends a full failsafe trajectory to an intended trajectory to form a so-called monitored trajectory 
\begin{align}\label{eq:shielded_trajectory}
    \chix{M} = 
    \begin{cases}
        \chix{I}\left(t; \xb_0, \uphix{I}(\cdot), \wb(\cdot)\right), &t \in [t_0, \td]\\
        \chix{F}\left(t; \xb_D, \uphix{F}(\cdot), \wb(\cdot)\right), & t \in [\td, \tb].
    \end{cases}
\end{align}
If the safety of the monitored trajectory is verified successfully, the robot executes the intended trajectory, and otherwise the failsafe trajectory from the last successfully verified monitored trajectory.
When assuming that the robot starts from an invariably safe state, safety is ensured indefinitely by induction~{\cite[p.~5]{althoff_2019_EffortlessCreation}}.

The authors in~\cite{beckert_2017_OnlineVerification, thumm_2022_ProvablySafe} verify the monitored trajectory against the specification that a contact between the human and robot should be prevented entirely.
For this, they compute the reachable occupancies of all human body parts $h_j \in \Hs$ in the time interval $\left[t_0, \tb\right]$ denoted as $\Ohumj{t_0}{\tb}$ assuming that the human inputs stem from a compact set $\humj{\ub}(t) \in \humj{\mathcal{U}}$, e.g., each human body part can have a maximal speed of \SI{1.6}{\meter \per \second} as defined in~\cite{iso_2010_SafetyMachinery}. 
They further compute the reachable occupancies of all robot links $r_i \in \Rs$ following the monitored trajectory as $\Orobia(\left[t_0, \tb\right])$.

The verification evaluates the safety of the monitored trajectory for all time intervals, robot links, and human body parts, defining the constraint
\begin{align}\label{eq:combined_constraint}
	&c_{\text{safe}}(\chix{M}) \coloneqq \bigwedge_{a=0}^{M-1} \bigwedge_{r_i \in \Rs} \bigwedge_{h_j \in \Hs} c_{\text{safe}, i, j}([t_a, t_{a+1}]) \, .
\end{align}
\citet{thumm_2022_ProvablySafe} consider the movement of a robot link $r_i$ to be safe in the time interval $[t_a, t_b]$ with respect to the body part $h_j$ if their reachable occupancies do not intersect
\begin{align}
    c_{\text{safe}, i, j}([t_a, t_b]) & \: = \bar{c}_{\text{contact}, i, j}([t_a, t_b]) \, , \nonumber \\
    \bar{c}_{\text{contact}, i, j}([t_a, t_b]) &\coloneqq  \Ohumja([t_a, t_b]) \cap \Orobia([t_a, t_b]) = \varnothing \, ,
\end{align}
where $\bar{c}$ is the negation of the predicate $c$.
\citet{schepp_2022_SaRATool} enables an efficient computation of these reachable occupancies and intersection checks, using capsules as reachable occupancies of the robot links and human body parts.

\subsection{Problem Statement}\label{sec:problem_statement}
We aim to derive a safety framework that allows the robot to come into contact with the human as long as it is slow enough to prevent injuries.
For this, we want to operate the robot at high speed as long as possible and only reduce the speed if necessary.
\iso{} provides three types of contact constraints: force, pressure, and energy, from which we are using the energy constraints for our problem statement.
We aim to minimize a general objective $J_g$ while ensuring that if a contact with the human is possible, the robot energy is below human pain and injury limits with
\begin{subequations}
\begin{align}
    &\qquad \min_{\uphix{}} \  J_g\label{eq:objective_a}\\
    &\text{subject to } \chi\left(t_g; \xb_0, \uphix{}(\cdot), \wb(\cdot)\right) \in \mathcal{X}_g(\boldsymbol{a}), \label{eq:objective_b}\\
    & \quad \forall r_i \in \Rs, h_j \in \Hs, t\in \left[t_0, t_g\right]: \\
    & \qquad \Ohumja(t) \cap \Orobia(t) = \varnothing \lor \robi{T}(t) \leq T_{i, j}, \label{eq:objective_c}\\
    & \quad \humj{\ub}(\cdot) \in \humj{\mathcal{U}}, \uphix{}(\cdot) \in \mathcal{U}, \wb(\cdot) \in \mathcal{W},\label{eq:objective_e}
\end{align}
\end{subequations}
where $t_g$ is the time the robot needs to reach the set of goal states, $\robi{T}(t)$ describes the maximal reachable kinetic energy that robot link~$r_i$ induces into the human contact at time $t$, and $T_{i, j}$ denotes the energy threshold at which a rigid body with the geometry of robot link $r_i$ is unlikely to cause an injury for the human body part $h_j$.
In our experiments, we always set the objective to minimizing the time to reach the goal $J_g=t_g$.
However, other objectives such as additionally minimizing the jerk or motor currents would be possible.
Note that the contact energy constraint can be converted into force or pressure constraints if we assume knowledge about the stiffness and damping properties of the human skin as detailed in Annex M of \iso{}. 

\subsection{Assumptions}\label{sec:assumptions}
Similar to \citet{thumm_2022_ProvablySafe}, we make three key assumptions in this work.
First, we assume that the human pose is measurable with a bounded measurement error of $\delta_{\text{meas}}$ and a bounded time delay of $\Delta t_{\text{meas}}$.
In our experiments, we achieve this by using a motion-tracking system, but our formulation can also be used with laser scanners~\cite{kumar_2019_SpeedSeparation} or depth cameras~\cite{himmelsbach_2018_SafeSpeed, rosenstrauch_2018_HumanRobot, yang_2022_DynamicSpeed, fujii_2024_RealtimeBatched}. 
The measurement error and delay are incorporated into the human reachable set computation using the set-based prediction in~\cite{schepp_2022_SaRATool}.
Second, we assume that the speed of any human body part never exceeds \SI{1.6}{\meter \per \second} when approaching the robot, as defined in DIN EN ISO 13855:2010.

Third, we assume that the intended and failsafe controller track the desired trajectory with a bounded tracking error, which is incorporated in the bounded system noise $\mathcal{W}$ of the robot reachable occupancy computation $\rob{\mathcal{O}}$~\cite{schepp_2022_SaRATool}. 
In our experiments, we are using a PD-controller for trajectory tracking, but prior work has effectively demonstrated bounded tracking error guarantees for a trajectory tracking controller based on model conformance checking~\cite{liu_2023_GuaranteesReal}.
Given the controller, for all joints $i=1, \dots, N$, we assume knowledge of the maximal possible joint velocities $\dqmax{i}$, acceleration $\ddqmax{i}$, and jerk $\dddqmax{i}$ over all trajectories and for all times $t$ with
\begin{align}
    \dqmax{i} &= \underset{t \leq \tau \leq t+\dt}{\sup} \abs{\dot{q}_i(\tau)} \\
    \ddqmax{i} &= \underset{t \leq \tau \leq t+\dt}{\sup} \abs{\ddot{q}_i(\tau)} \\
    \dddqmax{i} &= \underset{t \leq \tau \leq t+\dt}{\sup} \abs{\dddot{q}_i(\tau)} \, ,
\end{align}
where $\dt = t_1 - t_0$.

We additionally assume that through knowledge of the robot kinematics, we can determine the linear and angular velocity and acceleration of a point on the robot at a given time with bounded errors of $\norm{\wb_v} = \norm{\dot{\pb} - \dot{\pb}^e} \le w_{v, \, \text{max}}$, $\norm{\wb_{\omega}} = \norm{{\bom} - {\bom^e}} \le w_{\omega, \, \text{max}}$, $\norm{\wb_a} = \norm{\ddot{\pb} - \ddot{\pb}^e} \le w_{a, \, \text{max}}$, and $\norm{\wb_{\dot{\omega}}} = \norm{\dot{\bom} - \dot{\bom}^e} \le w_{\dot{\omega}, \, \text{max}}$, respectively, where $(\cdot)^e$ indicates the true value.
We can, e.g., use reachset conformance checking~\cite{liu_2018_ReachsetConformance} to establish these error bounds.

Similarly to most \pfl{} approaches~\cite{haddadin_2012_MakingRobots, beckert_2017_OnlineVerification, sloth_2018_ComputationSafe, svarny_2019_SafePhysical, lucci_2020_CombiningSpeed, lachner_2021_EnergyBudgets, liu_2021_OnlineVerification, steinecker_2022_MeanReflected, golshani_2023_ControlDesign, hamad_2023_ModularizeconquerGeneralized, pupa_2024_TimeOptimalEnergy}, we also assume that the human does not actively move into hurtful contact with the robot so that the speed difference between the human and the robot along the contact normal is always smaller or equal to the robot speed.

For classifying contacts, we additionally assume there is an over-approximating time-independent polytope representation of all environment elements that can cause clamping.
This information is typically available in industrial applications or can be obtained using laser scanning~\cite{bi_2021_SurveyLowCost}. 
\section{Safety framework for human-robot interaction} \label{sec:safety_framework}

\begin{figure}
    \includegraphics[width=\columnwidth]{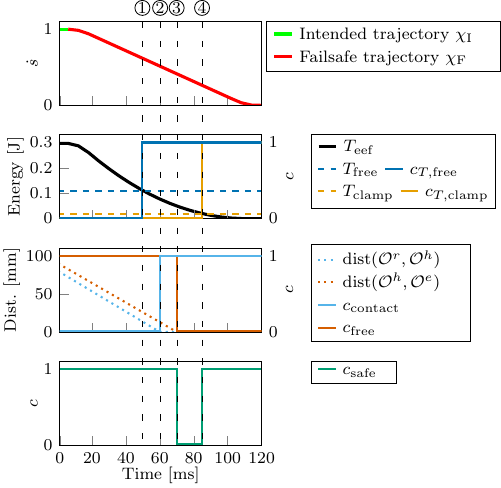}
    \caption{Example evolution of the constraints in~\eqref{eq:complete_constraint} over a monitored trajectory. 
    \circled{1} The kinetic energy of the robot end effector $T_{\text{eef}}$ drops below the threshold for unconstrained contacts $T_{\text{free}}$, so $c_{T, \text{free}} = \mathrm{true}$.
    \circled{2} The reachable occupancies of the robot and the human intersect ($\text{dist}(\rob{\mathcal{O}}, \rob{\mathcal{O}}) \le 0$), indicating that an unconstrained contact is possible, but the robot energy is safe.
    \circled{3} The human occupancy intersects with the environment ($\text{dist}(\hum{\mathcal{O}}, \env{\mathcal{O}}) \le 0$) and the robot ($\text{dist}(\rob{\mathcal{O}}, \hum{\mathcal{O}}) \le 0$), indicating that a constrained contact is possible.
    Since the robot energy is violating the constrained contact energy threshold $T_{\text{eef}} \ge T_{\text{clamp}}$, the trajectory is unsafe $c_{\text{safe}} = \mathrm{false}$.
    \circled{4} The robot energy drops below the constrained contact threshold, but once one time interval of the trajectory is unsafe, the entire trajectory is unsafe.
    }
    \label{fig:constraint_evolution}
\end{figure}

As highlighted in~\cref{sec:related_work_human_injuries}, unconstrained contacts permit a significantly higher transient contact force than constrained contacts~\cite{kirschner_2024_SafeRobot, kirschner_2024_UnconstrainedCollision}.
Therefore, we propose for the first time to classify possible contacts into constrained and unconstrained types using reachability analysis.
We then constrain the kinetic energy of the robot links at the contact points to be below the pain and injury limits for the respective contact type.

For the verification in~\eqref{eq:combined_constraint}, we define the safety constraint for a given robot link~$r_i$, human body part~$h_j$, and time interval $[t_a, t_b]$ as
\begin{align}\label{eq:complete_constraint}
c_{\text{safe}, i, j}([t_a, t_b]) \coloneqq \, &\bar{c}_{\text{contact}, i, j}([t_a, t_b]) \, \lor \\ & \left(c_{\text{free}, i, j}([t_a, t_b]) \land c_{T, \, \text{free}, i, j} \right) \lor \nonumber \\
& \left(\bar{c}_{\text{free}, i, j}([t_a, t_b]) \land c_{T, \, \text{clamp}, i, j} \right) \, , \nonumber
\end{align}
where 
\begin{itemize}
	\item $\bar{c}_{\text{contact}, i, j}$ evaluates if there is no contact between the robot link~$r_i$ and the human body part~$h_j$,
	\item $c_{\text{free}, i, j}$ evaluates if the contact between link $r_i$ and human body part $h_j$ is an unconstrained contact,
	\item $c_{T, \, \text{free}, i, j}$ evaluates if the kinetic energy of link~$r_i$ is below the injury threshold of body part~$h_j$ for unconstrained contacts, and
	\item $c_{T, \, \text{clamp}, i, j}$ evaluates if the kinetic energy of link~$r_i$ is below the pain and injury threshold of body part~$h_j$ for constrained contacts.
\end{itemize}
A typical evolution of these constraints over the length of a monitored trajectory is depicted in~\cref{fig:constraint_evolution}.
We derive $c_{\text{free}, i, j}$ in~\cref{sec:constrained_contact} and $c_{T, \, \text{\{type\}}, i, j}$ in~\cref{sec:energy_constraints}.

\section{Constrained contact detection}\label{sec:constrained_contact}
In this section, we present how we can use reachability analysis to detect potential constrained contacts.
In general, the human can be clamped between the robot and the static environment or in between two robot links.
We will refer to these two cases as environmentally constrained contacts (ECCs) and self-constrained contacts (SCCs).
Hence, we can ensure that a contact between link~$r_i$ and body part~$h_j$ in the time interval $[t_a, t_b]$ is unconstrained if
\begin{align}\label{eq:constrained_contact_constraint}
    c_{\text{free}, i, j}([t_a, t_b]) \coloneqq \bar{c}_{\text{ECC}, i, j}([t_a, t_b]) \land \bar{c}_{\text{SCC}, i, j}([t_a, t_b]) \, ,
\end{align}
i.e., the contact is neither an ECC ($\bar{c}_{\text{ECC}, i, j}$) nor an SCC ($\bar{c}_{\text{SCC}, i, j}$).
In the following subsections, we derive the general constraints $\bar{c}_{\text{ECC}, i, j}$ and $\bar{c}_{\text{SCC}, i, j}$, formulate a series of relaxations of those constraints, and discuss how we handle clamping of multiple human body parts.
\Cref{fig:clamping-constraints} visualizes these concepts for the example of an ECC.

\begin{figure}[t]
    \centering
    \includegraphics[width=\columnwidth]{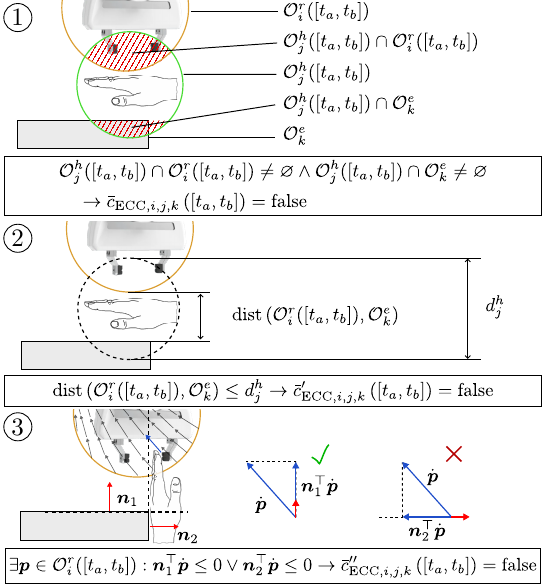}
    \caption{Example of clamping constraints:
    \circled{1} Since the reachable occupancy of the human intersects the ones of the robot and environment element, clamping is possible. 
    \circled{2} The smallest distance of the robot to the environment element is smaller than the human diameter, so clamping is possible. 
    \circled{3} There exists a point in the reachable occupancy of the robot that is moving towards the vertical face of the environment element, which could cause clamping as illustrated.}
    \label{fig:clamping-constraints}
\end{figure}

\subsection{General ECC and SCC Constraints} 
We model the environment as a set of large unmovable elements $\Es = \left\{e_1, \dots, e_K\right\}$, such as walls, the floor, or a table, with fixed occupancies $\Oenvk, e_k \in \Es$.
We define a trajectory $\chix{M}$ to be free of ECCs between a robot link~$r_i$ and a human body part~$h_j$ in a time interval $[t_a, t_b]$ if:
\begin{align}\label{eq:ecc}
	\bar{c}_{\text{ECC}, i, j}([t_a, t_b]) &\coloneqq \bigwedge_{e_k \in \Es} \bar{c}_{\text{ECC}, i, j, k}\left([t_a, t_b]\right)\\
    \bar{c}_{\text{ECC}, i, j, k}\left([t_a, t_b]\right) &\coloneqq  \Ohumja([t_a, t_b]) \cap \Oenvk = \varnothing \, , \label{eq:ecc_single}
\end{align}
i.e., there cannot be an ECC with the environment element $e_k$ if the reachable occupancy of the human body part does not intersect with it. 

For SCCs, we exclude a potential clamping of a body part~$h_j$ between two robot links $r_i, r_l \in \Rs, r_i \neq r_l$ in a time interval $[t_a, t_b]$ if:
\begin{align}\label{eq:scc_single}
    \bar{c}_{\text{SCC}, i, j, l}\left([t_a, t_b]\right) &\coloneqq \bar{c}_{\text{contact}, i, j}([t_a, t_b]) \lor \bar{c}_{\text{contact}, l, j}([t_a, t_b]) \, ,
\end{align}
i.e., there is no SCC if the human occupancy does not intersect with two different robot links.
Based on~\eqref{eq:scc_single}, we define the SCC constraint for a link~$r_i$ and a body part~$h_j$ as
\begin{align}\label{eq:scc}
	\bar{c}_{\text{SCC}, i, j}([t_a, t_b]) &\coloneqq \bigwedge_{r_l \in \Rs \setminus r_i} \bar{c}_{\text{SCC}, i, j, l}\left([t_a, t_b]\right) \, .
\end{align}

\subsection{Reducing Conservatism}\label{sec:reducing_conservatism}
The constraints in \eqref{eq:ecc} and \eqref{eq:scc} are quite conservative and would be often violated in close HRC.
Therefore, we subsequently propose three adaptions that reduce the conservatism of these constraints. %

\subsubsection{Human Diameter}\label{sec:human_diameter}
A natural condition that is required for constrained contacts is that the shortest distance between the robot link $r_i$ and environment element $e_k$ (ECC) or the two robot links $r_i$ and $r_l$ (SCC) must be smaller or equal to the maximal diameter of the human body part that could be clamped.
Therefore, we extend the constraints in~\eqref{eq:ecc_single} to 
\begin{align}
    \bar{c}_{\text{ECC}, i, j, k}'\left([t_a, t_b]\right) \coloneqq &\bar{c}_{\text{ECC}, i, j, k} \left([t_a, t_b]\right) \, \lor  \label{eq:ecc_single_2}\\ 
    & \  \dist\left(\Orobia([t_a, t_b]), \Oenvk\right) > \humj{d} \, , \nonumber
\end{align}
and the one in~\eqref{eq:scc_single} to
\begin{align}
    &\bar{c}_{\text{SCC}, i, j, l}' \left([t_a, t_b]\right) \coloneqq \bar{c}_{\text{SCC}, i, j, l} \left([t_a, t_b]\right) \, \lor \label{eq:scc_single_2}\\
    & \qquad \qquad \qquad \dist\left(\Orobia([t_a, t_b]), \Orobla([t_a, t_b])\right) > \humj{d} \, , \nonumber
\end{align}
where $\dist$ is a distance function that measures the smallest distance between two sets and $\humj{d}$ is the diameter of the human body part.
In Euclidean space, the distance function can be described as 
\begin{align}\label{eq:dist_1n}
    \dist(\mathcal{S}_1, \mathcal{S}_2) &= \underset{\pb_1, \, \pb_2}{\inf} \norm{\pb_1 - \pb_2}, \pb_1 \in \mathcal{S}_1, \pb_2 \in \mathcal{S}_2 \, ,
\end{align}
which we can efficiently compute for capsule representations~\cite[Ch. 10]{schneider_2003_GeometricTools}.
DIN 33402-2:2020-12 defines the \num{5}, \num{50}, and \num{95} percentile of human body part dimensions.
Hence, we use the \num{95} percentile measurements as an upper bound for the diameters of human body parts and assume that the system is informed if a user with a particularly large body size operates next to the robot.

\subsubsection{Robot Velocity}\label{sec:robot_velocity}
Clamping of a human body part between a point on the robot and a rigid body is only possible if the robot point lies outside of the rigid body and is moving towards it, i.e., the signed distance between the point and the rigid body is positive and increasing.
Therefore, we can exlude clamping for a robot link $r_i$ and a rigid body $b_n$ with occupancy $\Obodyn$ in the time interval $[t_a, t_b]$ if
\begin{align}\label{eq:increasing_distance_condition}
    &\forall t \in [t_a, t_b]: \forall \pb \in \Orobia(t): \,  \\
    & \qquad \qquad \pb \notin \Obodyn(t) \land \frac{d}{d t} \sd(\pb, \Obodyn(t)) \ge 0 \, ,\nonumber
\end{align}
i.e., the signed distance of any point in $\Orobia(t)$ to $\Obodyn(t)$ is increasing if that point lies outside of $\Obodyn(t)$.

In our case, the rigid body can be another robot link (SCC) or an environment element (ECC), so $b_n \in (\Rs \setminus r_i) \cup \Es$.
In the case of SCCs, we express the velocity of robot link $r_i$ in the frame of the other robot link $r_l$, and over-approximate the reachable occupancy of robot link $r_l$ with a polytope.
As such, we treat the robot link $r_l$ as a static polytope and derive the formal verification for an arbitrary rigid body $b_n$.

To verify~\eqref{eq:increasing_distance_condition}, it is insufficient to check if the smallest distance between the two sets is increasing, i.e., $\dist\left(\Orobia(t_b), \Obodyn(t_b)\right) > \dist\left(\Orobia(t_a), \Obodyn(t_a)\right)$, as any point on the robot can cause clamping, not just the closest point to the rigid body.
See part \circled{3} of~\cref{fig:clamping-constraints} as an example, where the closest point is moving away from the environment element, but clamping is still possible.
Instead, we propose to verify~\eqref{eq:increasing_distance_condition} for constrained contacts involving rigid bodies which occupancy is enclosed by one or several polytopes.

\begin{proposition}\label{prop:normal_velocity}
A constrained contact involving the robot link~$r_i$ and a rigid body $b_n$ with enclosing polytope $\Hb(\Nb, \db), \, \Nb \in \mathbb{R}^{H \times 3}$ in the time interval $[t_a, t_b]$ can be excluded if
\begin{align}\label{eq:velocity_constraint}
	&\Orobia([t_a, t_b]) \cap \Hb(\Nb, \db) = \varnothing \, \land \, \\
	& \qquad \forall \nb \in \mathcal{N}_{i, n}:	\forall \dot{\pb} \in \Vrobi{t_a}{t_b} : \nb^\top \dot{\pb}  \ge 0 \, , \nonumber
\end{align}
where $\mathcal{N}_{i, n}$ is the set of normal vectors defining the clamping-relevant halfspaces of the enclosing polytope with
\begin{align}
	\mathcal{N}_{i, n} = \{ \Nb[h]^\top &\smid h = 1, \dots, H: \exists \pb \in \Orobia([t_a, t_b]) : \\
	& \quad \Nb[h] \pb > \db[h] \} \, . \nonumber
\end{align}
\end{proposition}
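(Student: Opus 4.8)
The plan is to prove that the two conjuncts of~\eqref{eq:velocity_constraint} imply the no-clamping condition~\eqref{eq:increasing_distance_condition} with the moving rigid-body occupancy $\Obodyn(t)$ replaced by the static enclosing polytope $\Hb(\Nb, \db)$; since $\Hb(\Nb, \db) \supseteq \Obodyn(t)$, excluding a constrained contact against the larger polytope a fortiori excludes one against $b_n$, which is the reduction announced just before the proposition. It therefore suffices to show that, under~\eqref{eq:velocity_constraint}, every point of link~$r_i$ stays outside $\Hb(\Nb, \db)$ throughout $[t_a, t_b]$ and its signed distance to $\Hb(\Nb, \db)$ is non-decreasing there.

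The first part is immediate: the first conjunct of~\eqref{eq:velocity_constraint} is exactly $\Orobia([t_a, t_b]) \cap \Hb(\Nb, \db) = \varnothing$, so every $\pb \in \Orobia(t)$ lies outside $\Hb(\Nb, \db)$ and $\sd(\pb, \Hb(\Nb, \db)) = \dist(\pb, \Hb(\Nb, \db)) > 0$ for all $t \in [t_a, t_b]$. For the second part I would fix a point of the link, let $\pb(t)$ be its trajectory with velocity $\dot{\pb}(t) \in \Vrobi{t_a}{t_b}$, and use that the metric projection $\pb^{\star}(t)$ of $\pb(t)$ onto the convex polytope is unique. Since $\dist(\cdot, \Hb(\Nb, \db))$ is differentiable off $\Hb(\Nb, \db)$ with gradient $(\pb - \pb^{\star})/\norm{\pb - \pb^{\star}}$, we get $\tfrac{d}{dt}\sd(\pb(t), \Hb(\Nb, \db)) = \langle (\pb(t) - \pb^{\star}(t))/\norm{\pb(t) - \pb^{\star}(t)},\, \dot{\pb}(t) \rangle$, and the KKT conditions of the projection write $\pb(t) - \pb^{\star}(t) = \sum_h \mu_h \Nb[h]^{\top}$ with multipliers $\mu_h \ge 0$ that vanish unless facet~$h$ is active at $\pb^{\star}(t)$; hence the derivative equals $\norm{\pb(t) - \pb^{\star}(t)}^{-1} \sum_h \mu_h \Nb[h]\dot{\pb}(t)$. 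For every $h$ with $\mu_h > 0$ one checks that $\pb(t)$ itself violates facet~$h$, so $\Nb[h]^{\top} \in \mathcal{N}_{i, n}$, and the second conjunct of~\eqref{eq:velocity_constraint} yields $\Nb[h]\dot{\pb}(t) \ge 0$; summing non-negative terms shows the signed distance is non-decreasing, which is~\eqref{eq:increasing_distance_condition} for $\Hb(\Nb, \db)$.

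I expect the crux to be the step ``$\mu_h > 0 \Rightarrow \Nb[h]\pb(t) > \db[h]$''. It is clear when $\pb(t)$ projects into the relative interior of a single facet, since then $\pb(t) - \pb^{\star}(t)$ is a positive multiple of that facet's outward normal; at a lower-dimensional face the projection direction is a conic combination of several active facet normals, and the claim reduces to $\Nb[h](\pb(t) - \pb^{\star}(t)) > 0$ for each contributing $h$, which holds for box-shaped enclosing polytopes --- the case of interest here --- and, more generally, whenever the enclosing polytope is chosen with no pair of facet normals at an obtuse angle; otherwise one secures it by enlarging $\mathcal{N}_{i, n}$ to contain the normals of all facets active at projections of points of $\Orobia([t_a, t_b])$. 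A minor technical point is that $\pb^{\star}(t)$ can switch faces at isolated times, where one argues with one-sided derivatives so as to conclude that $t \mapsto \sd(\pb(t), \Hb(\Nb, \db))$ is monotone as a function rather than merely almost everywhere differentiable.
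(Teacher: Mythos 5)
Your proof takes essentially the same route as the paper's: both reduce the claim to showing that the signed distance of every link point to the static enclosing polytope is non-decreasing, write the projection direction (equivalently, the supporting-hyperplane normal) as a nonnegative combination $\sum_h \lambda_h \Nb[h]^\top$ of facet normals, and use the second conjunct of \eqref{eq:velocity_constraint} to make each term $\lambda_h \Nb[h]\dot{\pb}$ nonnegative; the paper merely phrases this as a contraposition. The crux you isolate --- that a facet can carry a positive multiplier in this decomposition while \emph{not} being violated by $\pb$ itself, so its normal need not belong to $\mathcal{N}_{i,n}$ --- is a genuine subtlety, and the paper's proof does not resolve it either: it asserts $\nb_{\pb} = \sum_{h \in \mathcal{H}_{\pb}} \lambda_h \Nb[h]^\top$ with $\mathcal{H}_{\pb}$ defined as the set of constraints violated by $\pb$, whereas the normal-cone/KKT argument only yields nonnegative multipliers over the facets active at the projection point, which can be a strictly larger set when two facet normals form an obtuse angle (e.g., a thin wedge). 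Your proposed remedies --- restricting to boxes or to enclosing polytopes with pairwise non-obtuse facet normals, or enlarging $\mathcal{N}_{i,n}$ to the facets active at projections of points of $\Orobia([t_a, t_b])$ --- are exactly what is needed to close this for general enclosing polytopes.
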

\begin{proof}
	We prove $\eqref{eq:velocity_constraint} \implies \eqref{eq:increasing_distance_condition}$ by contraposition, i.e.,
	\begin{align}
		&\exists t \in [t_a, t_b]: \exists \pb \in \Orobia(t): \pb \in \mathbb{H} \lor \frac{d}{d t} \sd(\pb, \mathbb{H}) < 0 \\
		& \implies \Orobia([t_a, t_b]) \cap \Hb(\Nb, \db) \neq \varnothing \, \lor \, \exists h=1, \dots, H: \nonumber \\
		& \qquad \qquad \qquad (\exists \pb \in \Orobia([t_a, t_b]): \Nb[h] \pb > \db[h]) \, \land \nonumber\\
		& \qquad \qquad \qquad (\exists \dot{\pb} \in \Vrobi{t_a}{t_b} : \Nb[h] \dot{\pb} < 0) \, . \nonumber
	\end{align}
    For the case that $\exists t \in [t_a, t_b]: \exists \pb \in \Orobia(t): \pb \in \mathbb{H}$, we get by definition of the reachable occupancy that $\exists t \in [t_a, t_b]: \exists \pb \in \Orobia(t): \pb \in \mathbb{H} \implies \Orobia([t_a, t_b]) \cap \Hb(\Nb, \db) \neq \varnothing$.
	For the opposite case, let $\nb_{\pb}$ be the normal vector defining the supporting hyperplane of a point $\pb \notin \mathbb{H}$ on $\mathbb{H}$, and let $\mathcal{H}_{\pb} = \left\{h \smid h=1, \dots, H, \Nb[h] \pb > \db[h]\right\}$ be the set of active polytope constraints for the point $\pb$.
	Then, we can write $\nb_{\pb} = \sum_{h \in \mathcal{H}_{\pb}} \lambda_h \Nb[h]^\top$ with $\lambda_h \ge 0$ \cite[Sec. 8.1.2]{boyd_2009_ConvexOptimization}.
	Let the halfspace defined by the hyperplane with normal vector $\nb_{\pb}$ be $\mathcal{S}(\nb_{\pb}, d_{\pb}) = \left\{\pb_{\pb} \smid \nb^\top_{\pb} \pb_{\pb} \le d_{\pb}\right\}$.
	The signed distance function for a halfspace is $\sd(\pb, \mathcal{S}(\nb_{\pb}, d_{\pb})) = \nb^\top_{\pb} \pb - d_{\pb}$ with $\frac{d}{d t} \sd(\pb, \mathcal{S}(\nb_{\pb}, d)) = \nb^\top_{\pb} \dot{\pb}$ if we assume a static polytope.
	By definition of the supporting hyperplane $\sd(\pb, \mathbb{H}) = \sd(\pb, \mathcal{S}(\nb_{\pb}, d_{\pb}))$.
	Therefore, ${\frac{d}{d t} \sd(\pb, \mathbb{H}) < 0} %
	\implies {\sum_{h \in \mathcal{H}_{\pb}} \lambda_h \Nb[h] \dot{\pb} < 0} \implies {\exists h \in \mathcal{H}_{\pb}: \Nb[h] \dot{\pb} < 0}$, which proofs \cref{prop:normal_velocity} as all possible robot velocities are contained in $\Vrobi{t_a}{t_b}$.
\end{proof}
In the following paragraph, we present how we efficiently evaluate~\eqref{eq:velocity_constraint} in practice.

\paragraph{Computation of the Reachable Velocity}
In \shield{}, we use capsules to represent the reachable occupancies of the robot links as described in~\cref{sec:safety_shield}.
We denote the linear and angular velocity of a point $\pb$ as $\vb = \left[\dot{\pb}, \bom\right]^\top$.
\begin{proposition}\label{prop:point_moving_away}
	All points in a capsule {$\mathcal{C}(\pb_1, \pb_2, r)$} with {$\vb_1 = \left[\dot{\pb}_1, \bom\right]^\top$} are moving away from a plane with normal vector $\nb$ if 
	\begin{align}
		\check{v}_{\nb} = \nb^\top \dot{\pb}_1 - (r + \, \norm{\pb_2 - \pb_1}) \, \norm{\nb \times \bom} > 0 \,.
	\end{align}
\end{proposition}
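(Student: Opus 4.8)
The plan is to treat the capsule $\mathcal{C}(\pb_1,\pb_2,r)$ as a rigid body whose twist is the pair $\vb_1=[\dot\pb_1,\bom]^\top$, so that the velocity of an arbitrary material point $\pb$ of the capsule is $\dot\pb=\dot\pb_1+\bom\times(\pb-\pb_1)$. Reading ``moving away from the plane with normal $\nb$'' as $\nb^\top\dot\pb>0$ (the same sign convention as $\nb^\top\dot\pb\ge 0$ in \cref{prop:normal_velocity}), it suffices to show that $\check v_{\nb}$ is a lower bound for $\nb^\top\dot\pb$ over all $\pb\in\mathcal{C}(\pb_1,\pb_2,r)$; the hypothesis $\check v_\nb>0$ then yields the claim uniformly.

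First I would parametrize a generic capsule point. Since $\mathcal{C}(\pb_1,\pb_2,r)=\overline{\pb_1\pb_2}\oplus\Bs(\boldsymbol 0,r)$, every $\pb\in\mathcal{C}(\pb_1,\pb_2,r)$ can be written as $\pb=\pb_1+\lambda(\pb_2-\pb_1)+\rb$ with $\lambda\in[0,1]$ and $\norm{\rb}\le r$, hence by the triangle inequality $\norm{\pb-\pb_1}\le\lambda\norm{\pb_2-\pb_1}+\norm{\rb}\le r+\norm{\pb_2-\pb_1}$. Substituting into the rigid-body velocity field gives $\nb^\top\dot\pb=\nb^\top\dot\pb_1+\nb^\top\bigl(\bom\times(\pb-\pb_1)\bigr)$, so the whole argument reduces to bounding the cross-product term from below, uniformly in $\pb$.

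For that term I would apply the scalar triple-product identity $\nb^\top(\bom\times\xb)=\xb^\top(\nb\times\bom)$ and then the Cauchy--Schwarz inequality, obtaining $\bigl|\nb^\top(\bom\times(\pb-\pb_1))\bigr|=\bigl|(\pb-\pb_1)^\top(\nb\times\bom)\bigr|\le\norm{\pb-\pb_1}\,\norm{\nb\times\bom}\le(r+\norm{\pb_2-\pb_1})\,\norm{\nb\times\bom}$, using the bound on $\norm{\pb-\pb_1}$ from the previous step. Therefore $\nb^\top\dot\pb\ge\nb^\top\dot\pb_1-(r+\norm{\pb_2-\pb_1})\norm{\nb\times\bom}=\check v_\nb$ for every point of the capsule, and $\check v_\nb>0$ closes the proof.

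There is no real technical obstacle here; the only subtleties are conceptual. The first is making the rigid-body assumption explicit — that $(\dot\pb_1,\bom)$ determines the entire velocity field of the capsule — which is exactly the kinematic model used for the robot links in \shield{} and is encoded in the notation $\vb_1=[\dot\pb_1,\bom]^\top$. The second is fixing the orientation convention so that $\nb^\top\dot\pb>0$ genuinely means ``moving away from the plane'', which is inherited from how the outward normals collected in $\mathcal N_{i,n}$ are set up in \cref{prop:normal_velocity}. Finally, I would note that the estimate is mildly conservative: it replaces the exact worst-case segment contribution $\min\{0,\,(\pb_2-\pb_1)^\top(\nb\times\bom)\}$ by $-\norm{\pb_2-\pb_1}\,\norm{\nb\times\bom}$, which makes $\check v_\nb$ insensitive to the orientation of the capsule axis relative to $\nb\times\bom$ at the price of a bound rather than an equality — an acceptable trade-off given that the capsule is itself an over-approximation of the link.
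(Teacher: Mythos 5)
Your proof is correct and follows essentially the same route as the paper's: the rigid-body velocity field $\dot{\pb}=\dot{\pb}_1+\bom\times(\pb-\pb_1)$, the capsule parametrization $\pb=\pb_1+\lambda(\pb_2-\pb_1)+\rb$ giving $\norm{\pb-\pb_1}\le r+\norm{\pb_2-\pb_1}$, the triple-product rearrangement to $(\pb-\pb_1)^\top(\nb\times\bom)$, and the Cauchy--Schwarz (equivalently, cosine) lower bound. The only cosmetic difference is that the paper writes the bound via $\cos(\alpha)\ge -1$ rather than invoking Cauchy--Schwarz by name.
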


\begin{proof}
Given a vector $\nb$, we want to determine the minimal speed any point in a capsule $\mathcal{C}$ has in the direction of $\nb$ as
\begin{align}
   \check{v}_{\nb} = \min_{\pb \in \mathcal{C}} \  & \, \nb^\top \dot{\pb} \, .
\end{align}
A point {$\pb \in \mathcal{C}(\pb_1, \pb_2, r)$} can be described as {$\pb = \pb_1 + \lambda (\pb_2 - \pb_1) + \rb$} with {$\lambda \in [0, 1]$}, and {$\norm{\rb} \le r$}.
The speed of such a point in the direction of $\nb$ is 
\begin{subequations}
\begin{align}
    \nb^\top \dot{\pb} &= \nb^\top \dot{\pb}_1 + \nb^\top \left(\bom \times (\pb - \pb_1)\right) \label{eq:approximation_linear_velocity_1}\\
    &= \nb^\top \dot{\pb}_1 + (\pb - \pb_1)^\top \left(\nb \times \bom\right) \\
    &= \nb^\top \dot{\pb}_1 + \norm{\pb - \pb_1} \norm{\nb \times \bom} \cos(\alpha) \\
    &\ge \nb^\top \dot{\pb}_1 - (r + \norm{\pb_2 - \pb_1}) \norm{\nb \times \bom} = \check{v}_{\nb} \, , \label{eq:approximation_linear_velocity}
\end{align}
\end{subequations}
where $\alpha$ is the angle between $\pb - \pb_1$ and $\nb \times \bom$.
Therefore, all points on a capsule {$\mathcal{C}(\pb_1, \pb_2, r)$} move away from a plane with normal vector $\nb$ if $\check{v}_{\nb} > 0$. 
\end{proof}

We can only compute the velocity and acceleration of a point on the $i$-th robot link $\robin{\pb}{1}$ at discrete time points using the link Jacobian $\robisimp{\Jb}$ {\cite[Eq. 7.16]{siciliano_2009_RoboticsModelling}}
as $\robin{\vb}{1} = \robisimp{\Jb} \dot{\qb}$ and $\robin{\dot{\vb}}{1} = \robisimp{\dot{\Jb}} \dot{\qb} + \robisimp{\Jb} \ddot{\qb}$. 
To bound the velocity error for an entire time interval, we apply~\cref{theo:taylor_velocity_bound}.

Let $\pb$ be a point on robot link $r_i$ in the time interval $[t_a, t_b]$, and let $\bar{t} = \frac{t_a + t_b}{2}$. 
The projection of the linear velocity $\dot{\pb}$ onto a vector $\nb$ can be approximated by a first-order Taylor polynomial:
\begin{align}\label{eq:definition_taylor_polynomial}
	v^{\pb}_{\nb}(t) &= v^{\pb}_{\nb}(\bar{t}) + \left(\dot{v}^{\pb}_{\nb}(\bar{t})\right) (t-\bar{t}) + r_{\nb}(t, \bar{t}) \, ,
\end{align}
where the Lagrange remainder $r_{\nb}$ satisfies
\begin{align}
	|r_{\nb}(t, \bar{t})| \le \underset{\tau \in [\bar{t}, t]}{\sup} \frac{1}{2} |\ddot{v}^{\pb}_{\nb}(\tau)| (t-\bar{t})^2 \, .
\end{align}

\begin{theorem}\label{theo:taylor_velocity_bound}
Using the approximation in~\eqref{eq:definition_taylor_polynomial}, the speed of any point on the $i$-th robot link $\pb \in \mathcal{C}(\robin{\pb}{1}, \robin{\pb}{2}, \robi{r}) \coloneqq \robisimp{\mathcal{C}}$ projected onto the vector $\nb$ with $\norm{\nb}=1$ in the time interval $[t_a, t_b]$ is bounded below by:
\begin{align}\label{eq:full_vel_bound}
	\check{v}_{\nb, i} &= \underset{\pb \in \robisimp{\mathcal{C}}, t \in [t_a, t_b]}{\min} v^{\pb}_{\nb}(t) \\
    &\ge \nb^\top \robin{\dot{\pb}}{1} - w_{v, \, \text{max}} - \robisimp{l} \left(\norm{\nb \times \robi{\bom}} + w_{\omega, \, \text{max}}\right) \label{eq:taylor_approx} \\
    & \quad \ + \dfrac{\Delta t}{2} \bigg( \nb^\top \robin{\ddot{\pb}}{1} - w_{a, \, \text{max}} - \robisimp{l} \big(\norm{\nb \times \robi{\dot{\bom}}} \, + \nonumber \\ 
    &\qquad \qquad \quad \norm{\robi{\bom}} \norm{\nb \times \robi{\bom}} + w_{\dot{\omega}, \, \text{max}} \, + \nonumber \\
    &\qquad \qquad w_{\omega, \, \text{max}} (\norm{\robi{\bom}} + \norm{\nb \times \robi{\bom}} + w_{\omega, \, \text{max}} )\big) \bigg) \nonumber \\
    &\quad \ - \dfrac{\Delta t^2}{8} \sum_{j=1}^{i} l_j \left(\ddommax{j} + 3 \, \dommax{j} \, \ommax{j} + \dommax{j}^2\right) \, , \nonumber
\end{align}
where $\Delta t = t_b - t_a$, $w_{v, \, \text{max}}$, $w_{\omega, \, \text{max}}$, $w_{a, \, \text{max}}$, and $w_{\dot{\omega}, \, \text{max}}$ are the maximal absolute estimation errors of the linear and angular velocity and acceleration, and $\robisimp{l} = \norm{\robin{\pb}{2} - \robin{\pb}{2}} + \robi{r}$, and 
\begin{subequations}
\begin{align}
    \ommax{j} &\coloneqq \sum_{k=1}^{j} \dqmax{k} \, ,  \\
    \dommax{j} &\coloneqq \sum_{k=1}^{j} \ddqmax{k} + \dqmax{k} \, \ommax{k-1} \, , \\
    \ddommax{j} &\coloneqq \sum_{k=1}^{j} \dddqmax{k} + 2 \ddqmax{k} \ommax{k-1} + \dqmax{k} \left(\dommax{k-1} + \ommax{k-1}^2\right) \, , \\
    l_j &\coloneqq 
    \begin{cases}
        \norm{\robjn{\pb}{2}-\robjn{\pb}{1}} \, , & j < i\\
        \norm{\robjn{\pb}{2}-\robjn{\pb}{1}} + r^r_j \, , & j=i \, .
    \end{cases}\label{eq:length_definition}
\end{align}
\end{subequations}
\end{theorem}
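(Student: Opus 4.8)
The plan is to fix an arbitrary capsule point $\pb \in \robisimp{\mathcal{C}}$ and an arbitrary time $t \in [t_a, t_b]$, apply the first-order Taylor expansion of \cref{def:vel_approximation} to the true velocity projection $v^{\pb}_{\nb}$ around the midpoint $\bar{t} = \tfrac{t_a + t_b}{2}$, and bound its three pieces separately: the value $v^{\pb}_{\nb}(\bar{t})$, the slope $\dot{v}^{\pb}_{\nb}(\bar{t})$, and the Lagrange remainder $r_{\nb}(t, \bar{t})$. Each bound combines the rigid-body kinematics of a point on link~$r_i$ at the expansion time with the measurement-error bounds $w_{v,\,\text{max}}, w_{\omega,\,\text{max}}, w_{a,\,\text{max}}, w_{\dot{\omega},\,\text{max}}$ from the assumptions; the remainder additionally needs recursive bounds on the angular velocity, acceleration, and jerk of every frame in the kinematic chain. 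Taking the minimum over $\pb$ and $t$ at the very end collapses the three contributions into \eqref{eq:full_vel_bound}.

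For the value term I would write the velocity of a capsule point in the rigid-body form $\dot{\pb} = \robin{\dot{\pb}}{1} + \robi{\bom} \times (\pb - \robin{\pb}{1})$, project onto $\nb$, and invoke the inequality established inside the proof of \cref{prop:point_moving_away} with endpoints $\robin{\pb}{1}, \robin{\pb}{2}$, radius $\robi{r}$, and lever arm $\robisimp{l} = \norm{\robin{\pb}{2} - \robin{\pb}{1}} + \robi{r}$, which gives $v^{\pb}_{\nb}(\bar{t}) \ge \nb^\top \dot{\pb}^{\,e}_1 - \robisimp{l}\,\norm{\nb \times \bom^e_i}$ in terms of the true ($(\cdot)^e$) base and angular velocities at $\bar{t}$. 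Replacing those by the Jacobian estimates $\robin{\dot{\pb}}{1}$ and $\robi{\bom}$ using $\norm{\wb_v} \le w_{v,\,\text{max}}$, $\norm{\wb_\omega} \le w_{\omega,\,\text{max}}$, and $\norm{\nb \times \vb} \le \norm{\vb}$ for $\norm{\nb} = 1$ reproduces the first line of \eqref{eq:taylor_approx} exactly.

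For the slope term, $\dot{v}^{\pb}_{\nb} = \nb^\top \ddot{\pb}$, and the rigid-body acceleration of the capsule point is $\ddot{\pb} = \robin{\ddot{\pb}}{1} + \robi{\dot{\bom}} \times (\pb - \robin{\pb}{1}) + \robi{\bom} \times \big(\robi{\bom} \times (\pb - \robin{\pb}{1})\big)$, with $\norm{\pb - \robin{\pb}{1}} \le \robisimp{l}$. Projecting onto $\nb$, rewriting each $\nb^\top(\ub \times \vb) = \vb^\top(\nb \times \ub)$, and applying Cauchy--Schwarz term by term upper-bounds $|\nb^\top \ddot{\pb}^{\,e}|$; the error $w_{\omega,\,\text{max}}$ enters quadratically because the centripetal term contains $\robi{\bom}$ twice, producing the $w_{\omega,\,\text{max}}^2$ piece. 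Multiplying by $|t - \bar{t}| \le \tfrac{\dt}{2}$ then yields the $\tfrac{\dt}{2}(\cdots)$ line of \eqref{eq:taylor_approx}; here one must be careful about how the sign of this first-order contribution interacts with the minimization over $t \in [t_a, t_b]$, the displayed expression corresponding to the worst admissible value of $t - \bar{t}$.

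The remainder term is where the real work lies. I would bound $|\ddot{v}^{\pb}_{\nb}(\tau)| = |\nb^\top \dddot{\pb}(\tau)|$, the jerk of a point on link~$r_i$, uniformly over $\tau \in [t_a, t_b]$. Differentiating the acceleration once more brings in the base jerk $\robin{\dddot{\pb}}{1}$ together with products of the link angular velocity, acceleration, and jerk with $(\pb - \robin{\pb}{1})$ whose combined norm is at most $\robisimp{l}\big(\norm{\robi{\ddot{\bom}}} + 3\,\norm{\robi{\dot{\bom}}}\,\norm{\robi{\bom}} + \norm{\robi{\bom}}^3\big)$. Telescoping this relation down the chain---the base of link~$r_i$ is the tip of link~$r_{i-1}$, and so on---bounds the jerk of any capsule point of link~$r_i$ by $\sum_{j=1}^{i} l_j\,(\cdots)_j$ with the lever arms $l_j$ of \eqref{eq:length_definition}. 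It then remains to establish $\norm{\bom_j} \le \ommax{j}$, $\norm{\dot{\bom}_j} \le \dommax{j}$, and $\norm{\ddot{\bom}_j} \le \ddommax{j}$ by induction on $j$ from the revolute-joint relations $\bom_j = \bom_{j-1} + \dot{q}_j \zb_j$ and $\dot{\zb}_j = \bom_{j-1} \times \zb_j$ together with $|\dot{q}_j| \le \dqmax{j}$, $|\ddot{q}_j| \le \ddqmax{j}$, $|\dddot{q}_j| \le \dddqmax{j}$; this induction reproduces precisely the recursive definitions of $\ommax{j}, \dommax{j}, \ddommax{j}$. Multiplying the resulting jerk bound by $\tfrac{1}{2}\big(\tfrac{\dt}{2}\big)^2 = \tfrac{\dt^2}{8}$ bounds $|r_{\nb}(t, \bar{t})|$ and gives the last line of \eqref{eq:full_vel_bound}. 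The main obstacle is this multi-link bookkeeping---propagating the jerk through the chain, tracking every cross-product coupling, and matching the result to the recursive constants; everything else reduces to a Taylor expansion plus \cref{prop:point_moving_away}.
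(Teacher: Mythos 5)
Your proposal follows the paper's proof essentially step for step: the same midpoint Taylor split into value, slope, and remainder terms; the same use of the rigid-body velocity/acceleration formulas with the capsule lever arm $\robisimp{l}$ and the bound from \cref{prop:point_moving_away}; and the same telescoping of the point jerk down the kinematic chain with inductive bounds $\ommax{j}, \dommax{j}, \ddommax{j}$ on the frame angular velocity, acceleration, and jerk. The only substantive divergence is in the jerk bound: you write the triple-cross-product contribution as $\norm{\robi{\bom}}^3$, whereas the paper's intermediate step has $\norm{\robi{\bom}}^2$ and its final constant has $\dommax{j}^2$; expanding $\robi{\bom} \times (\robi{\bom} \times (\robi{\bom} \times \rb_{i-1,i}))$ your cubic term is the correct one (and would yield $\ommax{j}^3$ in the final bound), so this appears to be a typo in the paper rather than a gap in your argument.
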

\begin{proof}
	See Appendix~\cref{sec:app1_vel_err}.
\end{proof}

Using $\check{v}_{\nb, i}$, we incorporate the ECC constraint in \eqref{eq:ecc_single_2} into a new constraint to exclude cases, where a robot link $r_i$ is moving away from the surface of an environment element $e_k$, using 
\begin{align}\label{eq:ecc_single_3}
    \bar{c}_{\text{ECC}, i, j, k}''\left([t_a, t_b]\right) &\coloneqq \bar{c}_{\text{ECC}, i, j, k}'\left([t_a, t_b]\right) \land \bar{c}_{\text{contact}, i, k}\left([t_a, t_b]\right) \nonumber\\
    & \qquad \qquad \bigwedge_{\nb \in \mathcal{N}_{i, k}} \left(\check{v}_{\nb, i} \ge 0\right) \, . 
\end{align}
Correspondingly, we define a new constraint based on the SCC constraint between two robot links $r_i$ and $r_l$ in~\eqref{eq:scc_single_2} as
\begin{align}\label{eq:scc_single_3}
    \bar{c}_{\text{SCC}, i, j, l}''\left([t_a, t_b]\right) &\coloneqq \bar{c}_{\text{SCC}, i, j, l}'\left([t_a, t_b]\right) \land  \bar{c}_{\text{contact}, i, l}\left([t_a, t_b]\right) \nonumber\\
    & \qquad \qquad \bigwedge_{\nb \in \mathcal{N}_{i, l}} \left(\check{v}_{\nb, i} \ge \check{v}_{\nb, l}\right) \, . 
\end{align}

\subsubsection{Robot Topology}\label{sec:robot_topology}
Most collaborative robots are designed to prevent clamping between certain robot links, either through their topology or by limiting the joint angles.
Hence, for a given robot, we define the set of robot link pairs that cannot cause an SCC as 
\begin{align}
 \rob{\mathcal{G}} = \left\{(r_i, r_l) \smid \text{no SCC between link~$r_i$ and $r_l$ possible} \right\}\,
\end{align}
and extend~\eqref{eq:scc_single_3} by
\begin{align}\label{eq:scc_single_4}
    &\bar{c}_{\text{SCC}, i, j, l}''' \left([t_a, t_b]\right) \coloneqq \bar{c}_{\text{SCC}, i, j, l}'' \left([t_a, t_b]\right) \lor 
    (r_i, r_l) \in \rob{\mathcal{G}} \, .
\end{align}

\subsubsection{Full Constrained Contact Detection}
With the derived constraint relaxations, we redefine the ECC and SCC constraints in~\eqref{eq:ecc} and \eqref{eq:scc} as
\begin{align}
	\bar{c}_{\text{ECC}, i, j}([t_a, t_b]) &\coloneqq \bigwedge_{e_k \in \Es} \bar{c}_{\text{ECC}, i, j, k}''\left([t_a, t_b]\right) \label{eq:ecc_2}\\
    \bar{c}_{\text{SCC}, i, j}([t_a, t_b]) &\coloneqq \bigwedge_{r_l \in \Rs \setminus r_i} \bar{c}_{\text{SCC}, i, j, l}'''\left([t_a, t_b]\right) \, . \label{eq:scc_2}
\end{align}

\subsection{Handling Multi-Body Clamping}
\begin{figure}
    \centering
    \includegraphics[width=0.9\columnwidth, trim=0 3mm 0 0mm, clip]{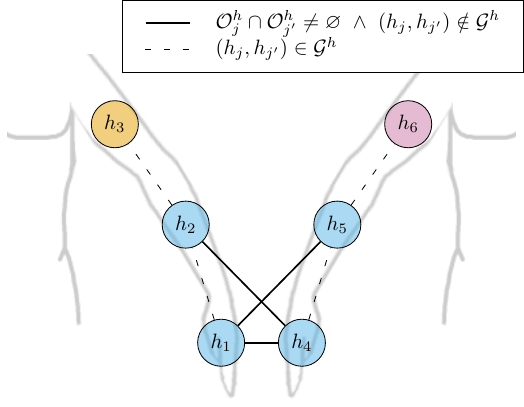}
    \caption{Example of a simplified contact graph $\mathbb{G}$ with two humans in the scene. 
    All connected components in the graph are colored differently.
    The connected component of $\Hs_{c, 1} = \left\{h_{1}, h_{2}, h_{4}, h_{5}\right\}$ is combined to a new body part and the set of all connected components is $\Hs_{c} = \left\{\Hs_{c, 1}, \left\{h_3\right\}, \left\{h_6\right\} \right\}$.}
    \label{fig:contact_graph}
\end{figure}

An important scenario to consider is the case, where the robot clamps multiple human body parts, e.g., both hands are clamped in between the robot and a table.
In these cases, the constraint in~\eqref{eq:constrained_contact_constraint} does not guarantee the detection of all possible constrained contacts anymore.
To handle the clamping of multiple human body parts at the same time, we combine all body parts that could be in contact with each other to a new virtual body part.
Fortunately, within a human body, there are a number of body part combinations that cannot lead to critical constrained contact and can thereby be excluded, e.g., clamping the right hand and the right lower arm.
Therefore, we define the set of safe human body part pairs as
\begin{align}
    &\hum{\mathcal{G}} = \{(h_j, h_{j'}) \smid  
    \text{no clamping of $h_j$ and $h_{j'}$ possible } \land \nonumber\\
    & \qquad \qquad \quad \text{ $h_j$ and $h_{j'}$ belong to the same human}\} \, . 
\end{align} 
To find all connected body parts, we construct an undirected graph $\mathbb{G}$, where the vertices are the body parts $h_j \in \Hs$ of all humans in the scene.
The edges between two vertices $h_{j}$ and $h_{j'}$ are denoted as $\langle h_j, h_{j'}\rangle$.
We construct an edge
\begin{align}
    \langle h_j, h_{j'}\rangle \ \text{if} \ \Ohumja \cap \Ohumjb \neq \varnothing \ \land \ (h_j, h_{j'}) \notin \hum{\mathcal{G}} \, .
\end{align}
\Cref{fig:contact_graph} provides an example of a simplified graph with two human arms in the scene.

We use a depth-first search algorithm~\cite{hopcroft_1973_Algorithm447} to find the set of all connected components\footnote{A connected component is a set of vertices where each vertex is reachable from any other vertex in the set, and no additional vertices can be added while preserving this property.} $\Hs_c \subset \mathcal{P}(\Hs)$ in the graph.
We construct an additional set of human body parts $\tilde{\Hs}$ by creating a new combined body part per connected component $\Hs_{c, a} \in \Hs_c: |\Hs_{c, a}| > 1$.
A new combined body part $h_{\tilde{j}} \in \tilde{\Hs}$ is built from the elements in $\Hs_{c, a} \in \Hs_c$ and has a diameter, occupancy, and admissible contact energy of
\begin{subequations}
	\begin{align}
		\humjc{d} &\coloneqq \sum_{h_j \in \Hs_{c, a}} \humj{d} \label{eq:combined_body_parts_1}\\
		\humjc{\mathcal{O}} &\coloneqq \bigcup_{h_j \in \Hs_{c, a}} \Ohumja  \label{eq:combined_body_parts_2}\\
		T_{\text{clamp}, i, \tilde{j}} &\coloneqq \underset{h_j \in \Hs_{c, a}}{\min} T_{\text{clamp}, i, j} \, . \label{eq:combined_body_parts_3}
	\end{align}
\end{subequations}

We guarantee the safety for these combined body parts by rewriting the constraints in \eqref{eq:combined_constraint} and \eqref{eq:complete_constraint} to the new constraint
\begin{align}\label{eq:combined_body_constraint}
	c'_{\text{safe}}(\chix{M})  &\coloneqq c_{\text{safe}}(\chix{M}) \bigwedge_{a=0}^{M-1} \bigwedge_{r_i \in \Rs} \bigwedge_{h_{\tilde{j}} \in \tilde{\Hs}} 
    \bar{c}_{\text{contact}, i, \tilde{j}}([t_a, t_{a+1}]) \, \lor \nonumber\\ 
    & \ \ \ c_{\text{free}, i, \tilde{j}}([t_a, t_{a+1}]) \lor c_{T, \, \text{clamp}, i, \tilde{j}}([t_a, t_{a+1}]) \, .
\end{align}
Note, that we do not verify the unconstrained contact energy for the combined body parts in~\eqref{eq:combined_body_constraint} as these are already verified for the individual bodies in $c_{\text{safe}}(\chix{M})$.
In Appendix~\cref{sec:app2_combined_body_parts} we show that if \eqref{eq:combined_body_constraint} holds for a combined body part, it also holds for all constrained contacts involving all possible combinations of body parts in that combined body part.
\section{Contact energy constraints}\label{sec:energy_constraints}
\begin{table*}[t]
	\centering
	\caption{Admissible contact energies.}
	\label{tab:contact_energies}
	\begin{tabular}{l cccc cccc}
		\toprule
		\multirow{2}{*}{\diagbox{Body Part}{Robot Geometry}} & \multicolumn{4}{c}{Constrained Contact $T_{\text{clamp}, i, j}$ [\si{\joule}]} & \multicolumn{4}{c}{Unconstrained Contact $T_{\text{free}, i, j}$ [\si{\joule}]}\\
		\cmidrule(lr){2-5} \cmidrule(lr){6-9}
         & Blunt$^\dagger$ & Wedge$^\star$ & Edge$^\star$ & Sheet$^\star$ & Blunt$^\dagger$ & Wedge$^\ddagger$ & Edge$^\ddagger$ & Sheet$^\ddagger$\\
		\midrule
		Hand       & 0.49  & 0.05 & 0.02 & 0.11 & 0.49 & 2.0  & 0.375 & 0.9\\
		Lower Arm  & 1.3 & 0.05 & 0.02 & 0.11 & 1.3  & 2.0  & 0.375 & 0.9\\
		Upper Arm  & 1.5 & 0.05 & 0.02 & 0.11 & 1.5  & 0.5  & 0.2 & 0.5\\
		Torso (Chest) & 1.6 & 0.05 & 0.02 & 0.11 & 1.6  & 0.5  & 0.2 & 0.5\\
		Head (Face) & 0.11 & 0.05 & 0.02 & 0.11 & 0.11 & 0.11$^\dagger$ & 0.11$^\dagger$ & 0.11$^\dagger$\\
		\bottomrule
		\multicolumn{9}{l}{~}\\[-0.22cm]
		\multicolumn{9}{l}{
			\footnotesize{
				$^\dagger$From~\cite{iso_2024_RoboticsSafety},
				$^\star$From~\cite{kirschner_2024_SafeRobot}, $^\ddagger$From~\cite{kirschner_2024_UnconstrainedCollision}
			}
		}
	\end{tabular}
\end{table*}

This section discusses how we evaluate the contact energy constraints $c_{T, \, \text{free}, i, j}$ and $c_{T, \, \text{clamp}, i, j}$ in~\eqref{eq:combined_body_constraint}.
For this, we derive the effective kinetic energy of the robot with respect to a contact with robot link~$r_i$ in a given configuration and define the necessary constraints for all possible contacts.

The kinetic energy of the entire robot is~\cite[Eq. 7.31]{siciliano_2009_RoboticsModelling} 
\begin{align}
    \rob{T}(\qb) &= \frac{1}{2} \dot{\qb}^\top \Bb(\qb) \dot{\qb} \, ,
\end{align}
where $\Bb(\qb)$ is the inertia matrix.
In this work, we assume that the robot consists of rigid body parts without joint elasticity, where the motor masses and inertia are included in the links, so we can formulate the inertia matrix as~\cite[Eq. 7.32]{siciliano_2009_RoboticsModelling}
\begin{align}
\Bb(\qb)& =\sum_{i=1}^{N} \robisimp{m} \robisimp{\Jb}^{P} \robisimp{\Jb}^{P} + \robisimp{\Jb}^{O \top} \robisimp{\Rb} \robisimp{\Ib} \robisimp{\Rb}^\top \robisimp{\Jb}^{O}\, ,
\end{align}
where $\robisimp{m}$ is the mass of link~$r_i$, $\robisimp{\Jb}^{P}$ and $\robisimp{\Jb}^{O}$ are the $i$-th link Jacobians~\cite[Eq. 7.16]{siciliano_2009_RoboticsModelling} with respect to the position and orientation, $\robisimp{\Rb}$ is the rotation matrix from the $i$-th link frame to the base frame, and $\robisimp{\Ib}$ is the inertia tensor of link~$r_i$.

We show that the effective kinetic energy of the robot with respect to a point on link $i < N$ is independent of the rotational velocity of the joints $i+1, \dots, N$.
Let $\robi{\pb}$ be a point on link $r_i$.
The motion of $\robi{\pb}$ only depends on the joint velocities $q_1, \dots, q_i$, i.e., %
\begin{align} \label{eq:vel_point_link}
    \robi{\vb} = \robisimp{\Jb} \robisimp{\Eb} \dot{\qb} \, \,
\end{align}
where $\robisimp{\Jb}$ is the Jacobian with respect to the point $\robi{\pb}$ and $\robisimp{\Eb}$ is the truncated identity matrix with $1$s on the diagonal up to row~$i$ and $0$s elsewhere.
The effective kinetic energy of the robot with respect to a point $\robi{\pb}$ is given in \cite[Sec. 2]{khatib_1995_InertialProperties} as
\begin{align}\label{eq:kinetic_energy_link}
    \robi{T}(\qb) = \frac{1}{2} \robitop{\vb} \robisimp{\Lambda}(\qb) \robi{\vb} \, ,
\end{align}
where 
\begin{align}\label{eq:kin_energy_matrix}
    \robisimp{\Lambda}(\qb) = (\robisimp{\Jb}^\top)^{-1} \Bb(\qb) (\robisimp{\Jb})^{-1}
\end{align}
is the operational space kinetic energy matrix from~\cite[Eq. 2]{khatib_1995_InertialProperties}.
Inserting \eqref{eq:vel_point_link} and \eqref{eq:kin_energy_matrix} into \eqref{eq:kinetic_energy_link} directly leads to the effective kinetic energy exerted upon contact with a point on the robot link~$r_i$ as
\begin{align}
    \robi{T}(\qb) &= \frac{1}{2} \dot{\qb}^\top \robisimp{\Eb} \Bb(\qb) \robisimp{\Eb} \dot{\qb} \, .
\end{align}

Thus, we define the contact energy constraints in~\eqref{eq:complete_constraint} as
\begin{align}
	c_{T, \, \text{free}, i, j}([t_a, t_b]) &= \max \left(\robi{T}(\qb(t_a)), \robi{T}(\qb(t_b))\right) < T_{\text{free}, i, j} \label{eq:free_energy_constraint}\\
	c_{T, \, \text{clamp}, i, j}([t_a, t_b]) &= \max \left(\robi{T}(\qb(t_a)), \robi{T}(\qb(t_b))\right) < T_{\text{clamp}, i, j},\label{eq:clamp_energy_constraint}
\end{align}
where $T_{\text{\{type\}}, i, j}$ gives the maximal contact energy for the respective human body part, contact type, and link index.
We list all contact energies used in \shield{} in~\cref{tab:contact_energies}.
For the torso and head, we use the chest and face thresholds of~\iso{} as they are more conservative than the abdomen and skull values.
The link index is included in this threshold so that we can differentiate between different worst-case contact shapes on the different links.
For most collaborative robots the first $N-1$ links are designed to only induce blunt contacts, whereas the worst-case impact shape on the end effector depends on the gripper and tool.
For the limits in~\cref{tab:contact_energies}, we assume that the human skin is similar to dew claws, as discussed in~\cite[Sec. III-A]{kirschner_2024_UnconstrainedCollision}.
The constrained contact energies for the head are taken from~\cite[Tab. M.4]{iso_2024_RoboticsSafety}, which already proposes conservative values.

\section{Experiments}\label{sec:experiments}
This section tests our two main hypotheses:
\begin{itemize}
    \item[\textbf{H1}] \vspace{0em} \shield{} significantly improves the robot performance in comparison to previous safe HRC approaches.
    \item[\textbf{H2}] \vspace{0em} The kinetic energy of a robot controlled with \shield{} during a contact with a human is below injury thresholds.
\end{itemize}
We evaluate \textbf{H1} in \cref{sec:robot_efficiency} and \textbf{H2} in \cref{sec:contact_eval}.
All experiments were conducted on a Schunk LWA 4P robot. 

\subsection{Robot Efficiency Evaluation}\label{sec:robot_efficiency}
In this subsection, we evaluate \textbf{H1} in simulation using recorded human motions in~\cref{sec:robot_efficiency_sim} and on a real HRC setup in~\cref{sec:robot_efficiency_real}.

\subsubsection{Simulated Robot Efficiency Study}\label{sec:robot_efficiency_sim}
\newcommand{\imgsuffix}{}
\ifthenelse{\boolean{anon}}{
	\renewcommand{\imgsuffix}{_anon}
}{}
\edef\stackingimg{stacking\imgsuffix.png}
\edef\towerimg{tower\imgsuffix.png}
\edef\puzzleimg{puzzle\imgsuffix.png}
\edef\screwingimg{screwing\imgsuffix.png}
\edef\robcoimg{robco\imgsuffix.png}
\begin{figure}[t]
    \subfloat[Stacking\label{fig:tasks:stacking}]{
        \includegraphics[width=0.32\columnwidth]{\stackingimg}
    }
    \hspace{-0.3cm}
    \subfloat[Tower\label{fig:tasks:tower}]{
        \includegraphics[width=0.32\columnwidth]{\towerimg}
    }
    \hspace{-0.3cm}
    \subfloat[Puzzle\label{fig:tasks:puzzle}]{
        \includegraphics[trim=7cm 6cm 5cm 3cm, clip, width=0.32\columnwidth]{\puzzleimg}
    }
    
    \vspace{-0.2em} %
    \subfloat[Screwing\label{fig:tasks:screwing}]{
        \includegraphics[trim=1cm 2cm 7cm 4cm, clip, width=0.48\columnwidth]{\screwingimg}
    }
    \hspace{-0.3cm}
    \subfloat[Assembly\label{fig:tasks:robco}]{
        \includegraphics[trim=5cm 5.25cm 6cm 3cm, clip, width=0.48\columnwidth]{\robcoimg}
    }

    \caption{The five HRC tasks evaluated in our ablation study.}
    \label{fig:tasks}
\end{figure}

\begin{table*}[t]
    \centering
    \caption{Mean and standard deviation of the robot efficiency across \num{30} trials.}
    \begin{tabular}{lcccccccc}
\toprule
  & \multicolumn{2}{c}{Efficiency per End Effector Type [\%]} &    &     &      \\
\cmidrule(lr){2-3}
Method & Blunt & Sharp & p-value of \textbf{{H1}} & Computation Time [ms] & Provably safe w.r.t.~\cref{sec:assumptions} \\
\midrule
SSM zone \cite{iso_2021_RoboticsSafety} & 6.8 $\pm$ 10.7 & 6.8 $\pm$ 10.7 & 8.14e-12 & \textbf{0.09} $\pm$ 0.01 & \ding{51} \\
Reduced-speed PFL \cite{iso_2021_RoboticsSafety} & 38.5 $\pm$ 8.5 & 38.5 $\pm$ 8.5 & 8.14e-12 & 0.12 $\pm$ 0.02 & \ding{53} \\
Dynamic SSM \cite{beckert_2017_OnlineVerification, thumm_2022_ProvablySafe} & 59.2 $\pm$ 19.7 & 59.2 $\pm$ 19.7 & 8.14e-12 & 0.15 $\pm$ 0.04 & \ding{51} \\
Reduced-speed zone \cite{svarny_2019_SafePhysical} & 64.5 $\pm$ 13.8 & 64.5 $\pm$ 13.8 & 8.14e-12 & 0.22 $\pm$ 0.04 & \ding{53} \\
Reflected mass \cite{haddadin_2012_MakingRobots} & 83.4 $\pm$ 3.4 & 80.4 $\pm$ 4.7 & 1.10e-11 & 1.00 $\pm$ 5.02 & \ding{53} \\
\shield{} without $c_{\mathrm{free}}$ & 82.3 $\pm$ 13.1 & 82.3 $\pm$ 7.8 & 8.15e-12 & 0.36 $\pm$ 0.05 & \ding{51} \\
\shield{} (\textbf{ours}) & \textbf{92.6} $\pm$ 6.3 & \textbf{92.5} $\pm$ 6.2 & -- & 0.41 $\pm$ 0.29 & \ding{51} \\
\bottomrule
\end{tabular}
    \label{tab:ablation}
\end{table*}

To fairly compare the robot efficiency across different methods, we gathered a dataset of human motion capture data in our lab while the robot was stopped\footnote{This study is exempt from ethics review because it involved non-invasive motion capture of adult participants performing standard movements in a laboratory setting. No personal or identifiable information was collected, participation was voluntary and without compensation, and no interventions or sensitive questions were involved.}.
For this, we defined the five tasks depicted in~\cref{fig:tasks}, which require close collaboration between the human and the robot and should emulate typical real-world tasks:
\begin{itemize}
    \item \textbf{Stacking}: the user has to stack two \num{10}$\times$\num{10}$\times$\SI{10}{\centi\metre} blocks in multiple pre-defined positions.
    \item \textbf{Tower}: the user has to stack eight \num{10}$\times$\num{10}$\times$\SI{10}{\centi\metre} blocks to a tower at two pre-defined positions.
    \item \textbf{Puzzle}: the user has to solve a simple puzzle.
    \item \textbf{Screwing}: the user has to screw ten screws into random positions using an allen wrench.
    \item \textbf{Assembly}: the user has to assemble two modules of a modular robot of the company RobCo.
\end{itemize}
All tasks were performed for three minutes in the robot workspace, and the task was repeated if finished early.
We collected data from six participants with a randomized order of tasks, leading to a total number of \num{30} trials.

For all tasks, the robot trajectory emulates a typical pick and place motion, where the robot approaches two pre-defined positions on the table located on the outer edges of the combined human and robot workspace.
This trajectory contains both potential high-speed unconstrained contacts and constrained contacts with the table.
Our supplementary video further visualizes these five tasks and the robot trajectory.

We simulate the robot behavior given the human measurements from the motion capture recordings across eight different methods\ifthenelse{\boolean{anon}}{:}{, all of which are available in our open-source code release:}
\begin{itemize}
    \item \textbf{No shield}: the robot executes its trajectory at full speed.
    \item \textbf{SSM zone}: if any body part is closer than \SI{1.17}{\meter} to the robot base, we stop the robot. This represents the separation distance speed and separation monitoring approach in \iso{}, and the threshold is calculated according to~\cite[Eq. 7]{svarny_2019_SafePhysical}.
    \item \textbf{Reduced-speed PFL}: the maximal Cartesian speed of the robot is limited to \SI{0.25}{\meter \per \second} representing the reduced-speed power and force limiting approach in \iso{}\footnote{\label{foot:not_safe}This approach does not guarantee safety for all robot masses and sharp robot geometries.}.
    \item \textbf{Dynamic SSM}: we use the set-based reachability speed and separation monitoring method of~\cite{beckert_2017_OnlineVerification, thumm_2022_ProvablySafe} that stops the robot at the latest possible moment.
    \item \textbf{Reduce speed zone}: we follow the approach of~\cite{svarny_2019_SafePhysical} and let the robot operate at full speed if the human is far away and switch to reduced-speed PFL if any body part is closer than \SI{0.73}{\meter} to the robot base\footnoteref{foot:not_safe}.
    \item \textbf{Reflected mass}: the reflected mass method proposed by \citet{haddadin_2012_MakingRobots}, where we adapt their safe motion unit velocities to match our energy limits in~\cref{tab:contact_energies}, ensuring a fair comparison. Since this approach has no contact type classification, we are using the more conservative constrained contact thresholds\footnote{This approach assumes a static human for the reflected mass calculation.}.
    \item \textbf{\shield{} without} $c_{\text{free}}$: our approach without contact type classification, where we use the more conservative constrained contact thresholds for all possible contacts.
    \item \textbf{\shield{}}: our proposed method with contact type classification and contact energy reduction.
\end{itemize}
In~\cref{tab:ablation}, we report the mean and standard deviation of the percentage of the total trajectory covered by each method in relation to the execution of the \textbf{no shield} method.
This metric can be interpreted as the task completion percentage of the collaborative robot and directly evaluates the robot efficiency.
We report the results for the blunt and sharp end effector geometries, where sharp contains edge, sheet, and wedge shapes to equal parts.

The results clearly show that \shield{} outperforms all baseline methods in terms of robot efficiency.
To make statistically significant claims, we compute the robot efficiency differences per trial and end effector type of all methods to \shield{}.
Our null hypothesis is that the efficiency values of the baseline method and the ones of \shield{} come from the same distribution, and the alternative hypothesis is \textbf{H1}.
For all methods, a Shapiro-Wilk test for normality~\cite{shapiro_1965_AnalysisVariance} indicates that the efficiency difference data is not normally distributed across the trials.
Hence, we perform a Wilcoxon signed-rank test~\cite{wilcoxon_1945_IndividualComparisons} to compute the statistical significance of \textbf{H1} across all baseline methods.
The p-values in~\cref{tab:ablation} show a very strong statistical evidence for \textbf{H1}.
In fact, for all but one trial in the comparison with the reflected mass method, \shield{} performed strictly better than the baseline methods.

Our evaluation further shows that our proposed contact type classification has a strong impact on the overall robot performance.
Without this method, our approach performs similar to that of~\citet{haddadin_2012_MakingRobots} with the major difference that we do not assume a static human for our contact energy thresholds.
We further see that the end effector type does not have a strong impact on the robot efficiency for \shield{} as most contact situations are unconstrained, where all end effector types permit a relatively high robot energy. 

\cref{tab:ablation} further reports the average runtime of the methods in our custom implementation\footnote{Executed single-thread on an Intel~\textregistered{} Core~\textcopyright{} i7-10750H @ 2.60GHz.}.
Note, that we focused our implementation of the state-of-the-art methods on a fair comparability of the robot efficiency rather than runtime efficiency. 
The computational overhead of \shield{} is not drastically higher than related methods and all methods are real-time deployable. %

We further provide an assessment whether the methods provide provable human safety with respect to (w.r.t.) our assumption in~\cref{sec:assumptions}.
The reduced-speed PFL approach in~\cite{iso_2021_RoboticsSafety} and the reduced-speed zone approach in~\cite{svarny_2019_SafePhysical} are not provably safe for sharp robot geometries as even light-weight robots with sharp contacts can inflict injuries at \SI{0.25}{\meter \per \second}~\cite{kirschner_2024_SafeRobot}.
Since \citet{haddadin_2012_MakingRobots} assume a stationary human for their verification, their reflected mass approach violates our assumption that humans can move with up to \SI{1.6}{\meter \per \second} in any direction.

\begin{figure*}[t]
\hspace{1.4cm}
\colorbox{white}{\makebox[0.142\textwidth][c]{\textbf{Hand}}}\hspace{0.65cm}%
\colorbox{white}{\makebox[0.142\textwidth][c]{\textbf{Lower arm}}}\hspace{0.65cm}%
\colorbox{white}{\makebox[0.142\textwidth][c]{\textbf{Upper arm}}}\hspace{0.65cm}%
\colorbox{white}{\makebox[0.142\textwidth][c]{\textbf{Chest}}}\hspace{0.65cm}%
\colorbox{white}{\makebox[0.142\textwidth][c]{\textbf{Face}}}\\[-0.0cm]

\raisebox{5\normalbaselineskip}[0pt][0pt]{\makebox[0pt]{\rotatebox[origin=c]{90}{\textbf{Edge \quad}}}}
\hspace{0.5cm}
\subfloat{\includegraphics[height=0.19\textwidth]{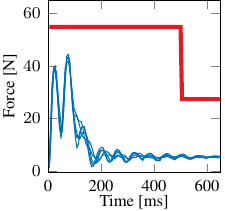}}\hspace{0.10cm}
\subfloat{\includegraphics[height=0.19\textwidth]{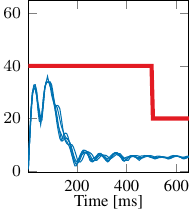}}\hspace{0.10cm}
\subfloat{\includegraphics[height=0.19\textwidth]{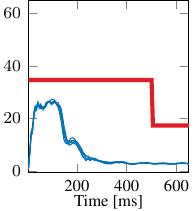}}\hspace{0.10cm}
\subfloat{\includegraphics[height=0.19\textwidth]{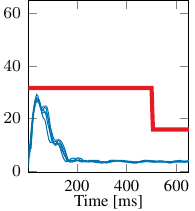}}\hspace{0.10cm}
\subfloat{\includegraphics[height=0.19\textwidth]{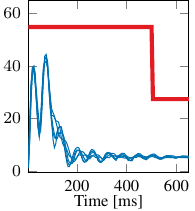}}\\[-0.2cm]

\raisebox{5\normalbaselineskip}[0pt][0pt]{\makebox[0pt]{\rotatebox[origin=c]{90}{\textbf{Wedge \quad}}}}
\hspace{0.4cm}
\subfloat{\includegraphics[height=0.19\textwidth]{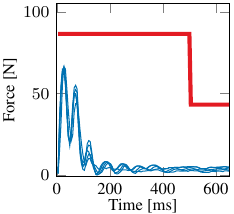}}\hspace{0.01cm}
\subfloat{\includegraphics[height=0.19\textwidth]{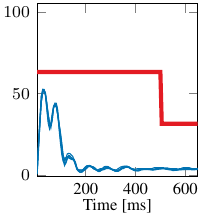}}\hspace{0.01cm}
\subfloat{\includegraphics[height=0.19\textwidth]{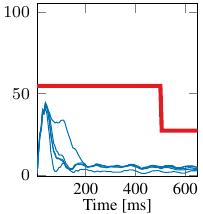}}\hspace{0.01cm}
\subfloat{\includegraphics[height=0.19\textwidth]{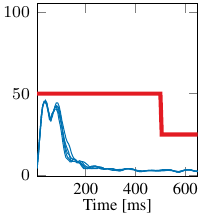}}\hspace{0.01cm}
\subfloat{\includegraphics[height=0.19\textwidth]{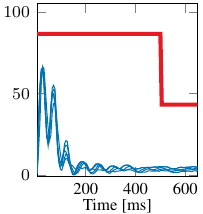}}\\[-0.2cm]

\raisebox{5\normalbaselineskip}[0pt][0pt]{\makebox[0pt]{\rotatebox[origin=c]{90}{\textbf{Sheet \quad}}}}
\hspace{0.4cm}
\subfloat{\includegraphics[height=0.19\textwidth]{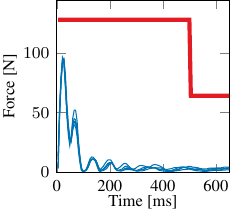}}\hspace{-0.05cm}
\subfloat{\includegraphics[height=0.19\textwidth]{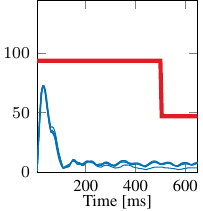}}\hspace{-0.05cm}
\subfloat{\includegraphics[height=0.19\textwidth]{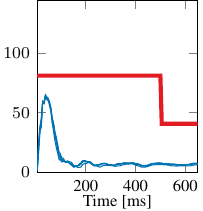}}\hspace{-0.05cm}
\subfloat{\includegraphics[height=0.19\textwidth]{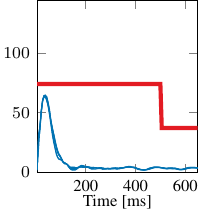}}\hspace{-0.05cm}
\subfloat{\includegraphics[height=0.19\textwidth]{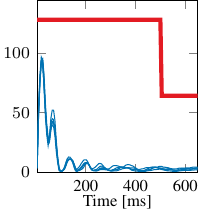}}\\[-0.2cm]

\raisebox{5\normalbaselineskip}[0pt][0pt]{\makebox[0pt]{\rotatebox[origin=c]{90}{\textbf{Blunt \quad}}}}
\hspace{0.4cm}
\subfloat{\includegraphics[height=0.19\textwidth]{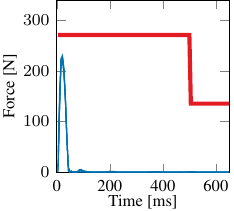}}\hspace{-0.05cm}
\subfloat{\includegraphics[height=0.19\textwidth]{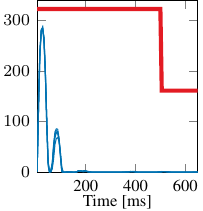}}\hspace{-0.05cm}
\subfloat{\includegraphics[height=0.19\textwidth]{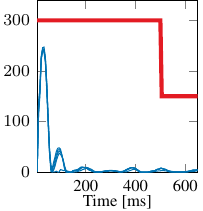}}\hspace{-0.05cm}
\subfloat{\includegraphics[height=0.19\textwidth]{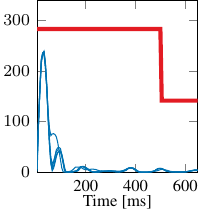}}\hspace{-0.05cm}
\subfloat{\includegraphics[height=0.19\textwidth]{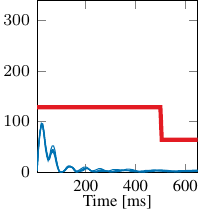}}\\[-0.5cm]
\caption{Force measurements for the Schunk robot using the Pilz robot measurement system for varying body parts (columns) and and end effector types (rows). 
The admissible force limits are depicted in red and the measured forces across five measurements in blue. 
The force limits in rows one to three are derived from the energy limits in~\cref{tab:contact_energies} using \eqref{eq:force_from_energy} and the limit in row four is from \iso{}.
}
\label{fig:pilz_force_measurements}
\end{figure*}

\subsubsection{Real-World HRC Evaluation}\label{sec:robot_efficiency_real}
To demonstrate the performance of \shield{}, we compare its real-world behavior with the speed and separation monitoring approach in~\cite{beckert_2017_OnlineVerification, thumm_2022_ProvablySafe} (dynamic SSM).
For this, we deploy both methods on a Schunk LWA 4P robot controlled by a speedgoat real-time target machine at a control frequency of \SI{166.7}{\hertz}.
The user performs the collaborative tasks described in~\cref{sec:robot_efficiency} in the workspace of the moving robot.
For this experiment, we set the end effector geometry to the edge type.

The performance difference of \shield{} and the dynamic SSM approach is best visualized in our supplementary video.
Here, we can see that the robot operated with \shield{} only has to slow down negligibly when moving from one pick location to another (unconstrained contact).
The dynamic SSM robot comes to a full stop or slows down significantly in these cases.
When a constrained contact is possible, our approach allows the robot to continue operation with a safe speed, whereas the dynamic SSM robot has to come to a full stop and wait for the user to clear the area.

To also give a quantitative analysis of this deployment, we again record the human motion and compare the executed robot path with a simulated execution without safety shield\footnote{A real-world deployment without safety shield would not be safe, so we did not gather real-world collaborative data for this setting.}.
Hereby, the robot operated with \shield{} achieves a relative performance of \num{93.7} $\pm$ \SI{4.9}{\percent} and the dynamic SSM robot falls short with a performance of \num{55.9} $\pm$ \SI{19.2}{\percent}.
These findings align with the results of~\cref{tab:ablation}, showing that the performance does not differ significantly when the user is aware of the robot.
These findings further support the validity of \textbf{H1}.

\subsection{Contact Energy Evaluation}\label{sec:contact_eval}
To evaluate \textbf{H2}, we conduct collision experiments for constrained contacts in~\cref{sec:constrained_contact_eval} and for unconstrained contacts in~\cref{sec:unconstrained_contact_eval}.

\subsubsection{Constrained Contact Validation}\label{sec:constrained_contact_eval}

\iso{} and ISO/TS 15066~\cite{iso_2016_RobotsRobotic} specify admissible contact forces given blunt robot geometries and reference stiffness values for all human body parts.
The goal of this subsection is to validate if a robot controlled with \shield{} adheres to these force limits in constrained contacts.
For this, we measure the contact forces of the robot during a constrained contact with a Pilz robot measurement system, which is certified for ISO/TS 15066~\cite{iso_2016_RobotsRobotic} and emulates varying contact stiffness.
During our tests, we manually set the human pose measurements so that the human body part under test is located at the sensor location and a constrained contact is detected.

To this date, no ISO norm specifies force limits for sharp contacts.
However, they provide a formula to convert the contact energy into a maximal transient contact force~\cite[Eq. M.1]{iso_2024_RoboticsSafety}
\begin{align}\label{eq:force_from_energy}
    F_{\text{clamp}, i, j} &= \sqrt{2 \humj{K} T_{\text{clamp}, i, j}} \, ,
\end{align}
where $\humj{K}$ is the reference stiffness of the human body part.
We use \eqref{eq:force_from_energy} to convert the contact energies in~\cref{tab:contact_energies} to admissible force limits for all end effector geometries, and perform five force measurements per end effector geometry (edge, wedge, sheet, and blunt) and human body part modeled in \shield{} (hand, lower arm, upper arm, chest, and face).

Since the primary purpose of this article is to derive a framework that effectively limits the transient contact forces, we deploy a simple contact reaction scheme using an external force sensor mounted on the end effector to limit the quasi-static forces.
Our contact reaction scheme reacts within \SI{100}{\milli\second}, which aligns with recently proposed methods using internal joint torque or current measurements~\cite{suita_1995_FailuresafetyKyozon, deluca_2006_CollisionDetection, kokkalis_2018_ApproachImplementing}.

The results in~\cref{fig:pilz_force_measurements} show that \shield{} successfully limits the transient contact forces to the given limits for all end effector geometries and human body parts.
We report that no force measurement exceeds the respective force limit, which confirms \textbf{H2} for constrained contacts.

\subsubsection{Unconstrained Contact Validation}\label{sec:unconstrained_contact_eval}

\begin{figure}[t]
	\centering
	\includegraphics[width=0.9\linewidth]{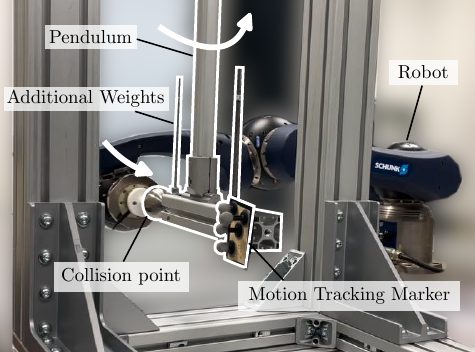}
	\caption{The pendulum experiment setup. The robot end effector collides with the pendulum, causing it to swing backwards. A Vicon motion tracking system measures the position of the pendulum. The mass of the pendulum can be increased by adding weights to the two rods. The robot stops at the point of contact.}
	\label{fig:pendulum_setup}
\end{figure}

To validate that the kinetic energy of the robot is below the unconstrained contact energy thresholds in~\cref{tab:contact_energies}, we set up an experiment where the robot collides with a pendulum at different speeds, as depicted in~\cref{fig:pendulum_setup}.
The robot moves along a trajectory such that the normal vector of the contact is perpendicular to the rotation axis of the pendulum. 
The end effector and pendulum are stiff, so we assume there is an elastic contact between them.
Upon contact, the pendulum swings backward, and the robot stops once the contact is detected by the external sensor as described in the previous subsection.

We measure the position of the pendulum with a motion-tracking system and calculate the maximum height of its center of mass in the swing-back motion using simple trigonometry.
Using the height measurement and the known pendulum mass, we can determine the potential energy of the pendulum at its peak $T_{\text{pot}} = m_{\text{pendulum}} \, g \, \Delta \mathsf{z}_{\text{COM}}$, where $g=\SI{9.81}{\meter \per \second \squared}$.
As the friction in the pendulum is negligible, we assume that the kinetic energy of the pendulum after the contact equals its peak potential energy.
Due to the law of conservation of energy, the kinetic energy of the robot at the time of contact can never exceed the potential energy of the pendulum. 

In our pendulum experiment, we define the energy thresholds $T_{\text{free}} = [\SI{0.1}{\joule}, \, \SI{0.25}{\joule}, \,\SI{0.4}{\joule}, \,\SI{0.55}{\joule}, \,\SI{0.7}{\joule}]$ of \shield{} and emulate a human body part $h_j$ positioned at the contact point.
For each energy value, we let the robot collide with a pendulum of increasing mass $m_{\text{pendulum}} = [\SI{1.132}{\kilogram}, \, \SI{2.132}{\kilogram}, \,\SI{3.132}{\kilogram}, \,\SI{4.132}{\kilogram}, \,\SI{5.132}{\kilogram}, \,\SI{6.132}{\kilogram}]$, with five contacts per mass value\footnote{\label{foot:highest_mass}We left out the mass of \SI{6.132}{\kilogram} for the highest energy of \SI{0.7}{\joule} to prevent damage to the robot hardware.}.
The calculated reflected mass~\cite{khatib_1995_InertialProperties} of the robot at the contact point is \SI{4.261}{\kilogram}, so these pendulum masses cover a range including the robot mass.
These masses also cover the typical effective masses of the human arm given in \iso{}, which are \SI{0.6}{\kilogram} for the hand, \SI{2.6}{\kilogram} for the forearm, and \SI{5.6}{\kilogram} for the entire arm.

\begin{figure}[t]
	\centering
	\includegraphics[width=0.9\linewidth]{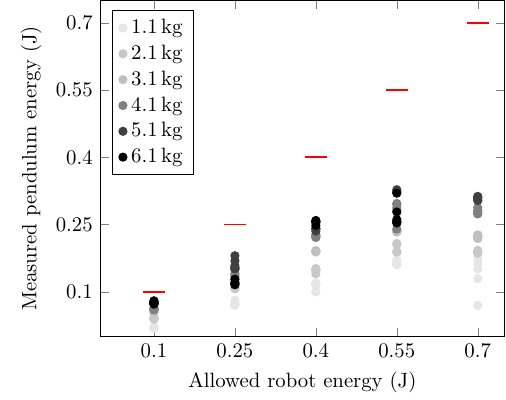}
	\caption{Results of the kinetic energy validation using a pendulum setup for varying pendulum masses. All measured pendulum energies are below the set maximal threshold of \shield{}\footnoteref{foot:highest_mass}.}
	\label{fig:pendulum_results}
\end{figure}

The results in~\cref{fig:pendulum_results} show that the measured pendulum energy does not exceed the maximal allowed kinetic energy during contact for all masses.
This supports the validity of \textbf{H2} for unconstrained contacts.
For higher velocities, the measured energy was strongly below the allowed contact energy, which might indicate that our assumption of an elastic contact is incorrect.
Nevertheless, the kinetic energy induced in the contact is lower than the respective limits, which would be safe for all human contacts.

\section{Conclusion}\label{sec:conclusion}
\shield{} is the first provably safe power and force limiting approach that ensures safe contact energies for different body parts, sharp robot geometries, and dynamic human motion.
Our approach is certifiable as it incorporates time and measurement delays for all human and robot motion and ensures safety through formal methods. 
In comparison to previous work, we do not assume a static human or simply verify against the most conservative force limit.
In extensive experimental evaluations, our approach significantly improved the robot performance over all state-of-the-art approaches.
The key methodology enabling this non-conservative behavior is classifying contacts by their type, as unconstrained contacts permit a significantly higher contact energy than constrained contacts.
With the significant improvements to robot performance in close HRC, \shield{} will accelerate the deployment of autonomous agents in critical human environments.
Future work aims at extending our framework to mobile manipulators and humanoid robots, with a particular focus on operational-space control.

\appendices

\section{Derivation of velocity error}\label{sec:app1_vel_err}
In this section, we prove~\cref{theo:taylor_velocity_bound}.
Throughout this section, we use the relations
\begin{align}
	\boldsymbol{a}^\top (\boldsymbol{b} + \boldsymbol{c}) &= \boldsymbol{a}^\top \boldsymbol{b} + \boldsymbol{b}^\top \boldsymbol{c} \label{eq:dot_distributive_relation} \\
	\boldsymbol{a} \times (\boldsymbol{b} + \boldsymbol{c}) &= \boldsymbol{a} \times \boldsymbol{b} + \boldsymbol{b} \times \boldsymbol{c} \label{eq:cross_distributive_relation} \\
	\boldsymbol{a}^\top (\boldsymbol{b} \times \boldsymbol{c}) &= \boldsymbol{b}^\top (\boldsymbol{c} \times \boldsymbol{a}) = \boldsymbol{c}^\top (\boldsymbol{a} \times \boldsymbol{b}) \label{eq:dot_cross_relation} \\
	\norm{\pb - \robin{\pb}{1}} &\le \norm{\robin{\pb}{2} - \robin{\pb}{1}} + \robi{r} \overset{\eqref{eq:length_definition}}= \robisimp{l} \label{eq:length_relation}\\
	\norm{\boldsymbol{a} \times \boldsymbol{b}} &= \norm{\boldsymbol{a}} \norm{\boldsymbol{b}} |\sin(\theta)| \le \norm{\boldsymbol{a}} \norm{\boldsymbol{b}} \label{eq:cross_product_relation} \\
	\boldsymbol{a}^\top \boldsymbol{b} &= \norm{\boldsymbol{a}} \norm{\boldsymbol{b}} \cos(\theta) \ge - \norm{\boldsymbol{a}} \norm{\boldsymbol{b}} \label{eq:dot_product_relation}
\end{align}
to derive this proof, where $\theta$ is the angle between the vectors $\boldsymbol{a}$ and $\boldsymbol{b}$.
\begin{proof}
We derive the approximation in three parts:
\begin{align}
	\check{v}_{\nb, i} &= \underset{\pb \in \robisimp{\mathcal{C}}, t \in [t_a, t_b]}{\min} v^{\pb}_{\nb}(t) \\
	\overset{\text{\eqref{eq:definition_taylor_polynomial}}}&\ge \underbrace{\underset{\pb \in \robisimp{\mathcal{C}}}{\min} \, v^{\pb}_{\nb}(\bar{t})}_{\rom{1}} + \dfrac{\Delta t}{2} 
	\underbrace{\underset{\pb \in \robisimp{\mathcal{C}}}{\min} \, \dot{v}^{\pb}_{\nb}(\bar{t})}_{\rom{2}} - \dfrac{\Delta t^2}{8} \underbrace{\underset{\pb \in \robisimp{\mathcal{C}}, t \in [t_a, t_b]}{\max} \, |\ddot{v}^{\pb}_{\nb}(t)|}_{\rom{3}} \label{eq:taylor_approx_three}
\end{align}
The relation in~\eqref{eq:taylor_approx_three} follows trivially from the subadditivity property of the minimum operator.
In the following paragraphs, we show the under-approximations for \rom{1}, \rom{2} and \rom{3}.

\paragraph{Lower bound of \rom{1}}
We under-approximate \rom{1} as
\begin{subequations}
\begin{align}
    v^{\pb}_{\nb} &= \nb^\top \dot{\pb} \\
    \overset{\text{\cite[Eq. 3.26]{siciliano_2009_RoboticsModelling}}}&= \nb^\top \big(\robin{\dot{\pb}}{1} + \wb_v + (\robi{\bom} + \wb_{\omega}) \times (\pb - \robin{\pb}{1})\big)\\
    \overset{\eqref{eq:approximation_linear_velocity}}&\ge \nb^\top \robin{\dot{\pb}}{1} - w_{v, \, \text{max}} - \robisimp{l} (\norm{\nb \times \robi{\bom}} + w_{\omega, \, \text{max}})\, .\label{eq:approximation_linear_velocity_with_error}
\end{align}
\end{subequations}

\paragraph{Lower bound of \rom{2}} 
Similarly to \eqref{eq:approximation_linear_velocity}, we express the velocity of any point on the $i$-th robot capsule $\pb \in \robisimp{\mathcal{C}}$ in relation to the first of the defining points of that capsule $\robin{\pb}{1}$ with
\begin{subequations}
\begin{align}
    \dot{v}^{\pb}_{\nb} &= \nb^\top \ddot{\pb} \\
    \overset{\text{\cite[Eq. 7.99]{siciliano_2009_RoboticsModelling}}}&= \nb^\top \big(\robin{\ddot{\pb}}{1} + \wb_a + (\robi{\dot{\bom}} + \wb_{\dot{\omega}}) \times (\pb - \robin{\pb}{1}) \\ 
    &\qquad \quad + (\robi{\bom} + \wb_{\omega}) \times \big((\robi{\bom} + \wb_{\omega}) \times (\pb - \robin{\pb}{1})\big)\big) \nonumber\\
    \overset{\eqref{eq:dot_distributive_relation} , \eqref{eq:dot_cross_relation}}&= \nb^\top (\robin{\ddot{\pb}}{1} + \wb_a) +
    (\pb - \robin{\pb}{1})^\top \big(\nb \times (\robi{\dot{\bom}} + \wb_{\dot{\omega}})\big) \nonumber \\ 
    &\qquad \ + \big((\robi{\bom} + \wb_{\omega}) \times (\pb - \robin{\pb}{1})\big)^\top \big(\nb \times (\robi{\bom} + \wb_{\omega})\big) \\
    \overset{\eqref{eq:dot_distributive_relation} , \eqref{eq:cross_distributive_relation}}&= \nb^\top \robin{\ddot{\pb}}{1} + \nb^\top \wb_a \\
    &\qquad \ + (\pb - \robin{\pb}{1})^\top \big((\nb \times \robi{\dot{\bom}}) + (\nb \times \wb_{\dot{\omega}})\big) \nonumber \\
    &\qquad \ + (\robi{\bom} \times (\pb - \robin{\pb}{1}))^\top \big((\nb \times \robi{\bom}) + (\nb \times \wb_{\omega})\big) \nonumber \\
    &\qquad \ + (\wb_{\omega} \times (\pb - \robin{\pb}{1}))^\top \big((\nb \times \robi{\bom}) + (\nb \times \wb_{\omega})\big) \nonumber \\ 
    \overset{\eqref{eq:cross_product_relation} , \eqref{eq:dot_product_relation}}&\ge \nb^\top \robin{\ddot{\pb}}{1} - w_{a, \, \text{max}} \\
    & \qquad - \norm{\pb - \robin{\pb}{1}} \big(\norm{\nb \times \robi{\dot{\bom}}} + w_{\dot{\omega}, \text{max}} \nonumber\\
    &\qquad \qquad + \norm{\robi{\bom}} (\norm{\nb \times \robi{\bom}} + w_{\omega, \, \text{max}}) \nonumber \\
    &\qquad \qquad + w_{\omega, \, \text{max}} (\norm{\nb \times \robi{\bom}} + w_{\omega, \, \text{max}})\big) \nonumber\\
	&\overset{\eqref{eq:length_relation}}= \nb^\top \robin{\ddot{\pb}}{1} - w_{a, \, \text{max}} - \robisimp{l} \big(\norm{\nb \times \robi{\dot{\bom}}} \label{eq:approximation_linear_acceleration}\\ 
	&\qquad + \norm{\robi{\bom}} \norm{\nb \times \robi{\bom}} + w_{\dot{\omega}, \, \text{max}} \nonumber \\
	&\qquad + w_{\omega, \, \text{max}} (\norm{\robi{\bom}} + \norm{\nb \times \robi{\bom}} + w_{\omega, \, \text{max}})\big)\nonumber \, ,
\end{align}
\end{subequations}
which provides the limit for \rom{2} in~\eqref{eq:taylor_approx_three}.

\paragraph{Lower bound of \rom{3}} We aim to derive a time-independent solution for the Lagrangian remainder that holds for all possible joint angles, velocities, accelerations, and jerks. %
The absolute value of the jerk of any point projected onto a normal vector $\nb$ is smaller or equal to its absolute jerk
\begin{align}
    |\ddot{v}^{\pb}_{\nb}| = |\nb^\top \dddot{\pb}| \le \norm{\dddot{\pb}} \, .
\end{align}
The acceleration of a point $\robi{\pb}$ on link $r_i$ is provided in~\cite[Eq. 7.99]{siciliano_2009_RoboticsModelling} as 
\begin{align}
    \robi{\ddot{\pb}} = \rob{\ddot{\pb}}_{i-1} + \robi{\dot{\bom}} \times \rb_{i-1, i} + \robi{\bom} \times (\robi{\bom} \times \rb_{i-1, i}) \, ,
\end{align}
where joint $i$ is located at $\rob{\pb}_{i-1}$, $\robi{\bom}$ is the angular velocity of the $i$-th link, and $\rb_{i-1, i} = \robi{\pb} - \rob{\pb}_{i-1}$.
The jerk of this point is
\begin{align}\label{eq:point_jerk}
    \robi{\dddot{\pb}} &= \rob{\dddot{\pb}}_{i-1} + \robi{\ddot{\bom}} \times \rb_{i-1, i} + \robi{\dot{\bom}} \times \dot{\rb}_{i-1, i} \\ 
    & \qquad + \robi{\dot{\bom}} \times \big(\robi{\bom} \times \rb_{i-1, i}\big) \nonumber\\
    & \qquad + \robi{\bom} \times \big(\robi{\dot{\bom}} \times \rb_{i-1, i} + \robi{\bom} \times \underbrace{\dot{\rb}_{i-1, i}}_{\overset{\text{\cite[Eq. 3.24]{siciliano_2009_RoboticsModelling}}}=\robi{\bom} \times \rb_{i-1, i}}\big) \nonumber \\
    &\overset{\eqref{eq:cross_distributive_relation}}= \rob{\dddot{\pb}}_{i-1} + \robi{\ddot{\bom}} \times \rb_{i-1, i} + 2 \, \robi{\dot{\bom}} \times (\robi{\bom} \times \rb_{i-1, i}) \\ 
    & \qquad + \robi{\bom} \times \big(\robi{\dot{\bom}} \times \rb_{i-1, i} + \robi{\bom} \times (\robi{\bom} \times \rb_{i-1, i})\big) \, . \nonumber
\end{align}
We use \eqref{eq:cross_product_relation} to derive an upper limit for the absolute jerk
\begin{align}
    \norm{\robi{\dddot{\pb}}} \le \, &\norm{\rob{\dddot{\pb}}_{i-1}} + \norm{\rb_{i-1, i}} \big(\norm{\robi{\ddot{\bom}}} + 3 \norm{\robi{\bom}} \norm{\robi{\dot{\bom}}} \nonumber \\
    &\qquad \qquad \qquad \qquad \qquad + \norm{\robi{\bom}}^2\big) \, . \label{eq:absolute_jerk_incremental}  
\end{align}
The angular velocity and acceleration of the $i$-th link are provided in~\cite[Eq. 3.25 and Eq. 7.96]{siciliano_2009_RoboticsModelling} as
\begin{align}
    \robi{\bom} &= \rob{\bom}_{i-1} + \dot{\qb}[i] \zb_{i-1} \, , \label{eq:angular_velocity} \\
    \robi{\dot{\bom}} &= \rob{\dot{\bom}}_{i-1} + \ddot{\qb}[i] \zb_{i-1} + \dot{\qb}[i] \rob{\bom}_{i-1} \times \zb_{i-1} \, , \label{eq:angular_acceleration}
\end{align}
where $\zb_{i-1}$ is the rotation axis of the $i$-th joint and $\norm{\zb_{i-1}}=1$.
The angular jerk is
\begin{align}
    \robi{\ddot{\bom}} &= \rob{\ddot{\bom}}_{i-1} + \dddot{\qb}[i] \zb_{i-1} + \ddot{\qb}[i] \dot{\zb}_{i-1} \\
    & \qquad + \big(\ddot{\qb}[i] \rob{\bom}_{i-1} + \dot{\qb}[i] \rob{\dot{\bom}}_{i-1} \big) \times \zb_{i-1} \nonumber \\
    & \qquad + \dot{\qb}[i] \rob{\bom}_{i-1} \times \underbrace{\dot{\zb}_{i-1}}_{\overset{\text{\cite[Eq. 7.96]{siciliano_2009_RoboticsModelling}}}=\rob{\bom}_{i-1} \times \zb_{i-1}} \nonumber \\
    &\overset{\eqref{eq:cross_distributive_relation}}= \rob{\ddot{\bom}}_{i-1} + \dddot{\qb}[i] \zb_{i-1} + 2 \ddot{\qb}[i] \rob{\bom}_{i-1} \times \zb_{i-1} + \label{eq:angular_jerk}\\
    & \qquad \dot{\qb}[i] \big(\rob{\dot{\bom}}_{i-1} \times \zb_{i-1} + \rob{\bom}_{i-1} \times (\rob{\bom}_{i-1} \times \zb_{i-1})\big) \, . \nonumber 
\end{align}
Using the joint velocity, acceleration, and jerk limits, we derive limits for the angular velocity, acceleration, and jerk as
\begin{align}
    \norm{\robi{\bom}} \overset{\eqref{eq:angular_velocity}}&\le \norm{\rob{\bom}_{i-1}} + |\dot{\qb}[i]| \label{eq:angular_velocity_limit}\\
    &\le \sum_{j=1}^{i} \dqmax{j} = \ommax{i} \nonumber \, ,\\
    \norm{\robi{\dot{\bom}}} \overset{\eqref{eq:angular_acceleration}}&\le \norm{\rob{\dot{\bom}}_{i-1}} + |\ddot{\qb}[i]| + |\dot{\qb}[i]| \, \norm{\rob{\bom}_{i-1}} \label{eq:angular_acceleration_limit}\\
    &\le \sum_{j=1}^{i} \ddqmax{j} + \dqmax{j} \, \ommax{j-1} = \dommax{i} \nonumber \, ,\\
    \norm{\robi{\ddot{\bom}}} \overset{\eqref{eq:angular_jerk}}&\le \norm{\rob{\ddot{\bom}}_{i-1}} + |\dddot{\qb}[i]| + 2 |\ddot{\qb}[i]| \, \norm{\rob{\bom}_{i-1}} + \label{eq:angular_jerk_limit}\\
    & \qquad |\dot{\qb}[i]| \, \big(\norm{\rob{\dot{\bom}}_{i-1}} + \norm{\rob{\bom}_{i-1}}^2 \big) \nonumber \\
    &\le \sum_{j=1}^{i} \dddqmax{j} + 2 \ddqmax{j} \ommax{j-1} + \dqmax{j} \big(\dommax{j-1} + \ommax{j-1}^2\big) = \ddommax{i} \, . \nonumber
\end{align}
Using these limits, we can bound the jerk of a point $\pb \in \robisimp{\mathcal{C}}$ on the $i$-th link to
\begin{align}
    \norm{\dddot{\pb}} \overset{\eqref{eq:absolute_jerk_incremental}, \eqref{eq:angular_velocity_limit}, \eqref{eq:angular_acceleration_limit}, \eqref{eq:angular_jerk_limit}}\le \, &\sum_{j=1}^{i} l_j \big(\ddommax{j} + 3 \, \dommax{j} \, \ommax{j} + \dommax{j}^2\big)\, ,
\end{align}
where the lengths $l_j$ are given in~\eqref{eq:length_definition}.
For all links $j < i$, the length is the distance between the two joints $ \norm{\rb_{j-1, j}} = l_j = \norm{\robjn{\pb}{2}-\robjn{\pb}{1}}$, and for the $i$-th link, the length is the maximal distance a point can have from the $i$-th joint $\norm{\rb_{i-1, i}} \le l_i = \norm{\robin{\pb}{2}-\robin{\pb}{1}} + \robi{r}$. 
\end{proof}

\section{Safety of Combined Body Parts}\label{sec:app2_combined_body_parts}
This section proves that verifying against the constraint for combined body parts guarantees safety.

\begin{proposition}\label{prop:combined_bodies}
    Verifying against the constraint in~\eqref{eq:combined_body_constraint} with the properties in~\eqref{eq:combined_body_parts_1} -- \eqref{eq:combined_body_parts_3} guarantees the safety of multiple body parts against clamping.
\end{proposition}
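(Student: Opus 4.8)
To prove \cref{prop:combined_bodies}, I read ``safety against clamping'' as the statement that for every robot link $r_i$, every interval $[t_a,t_{a+1}]$, and every set $S\subseteq\Hs$ of human body parts, either a constrained contact of $S$ with $r_i$ is impossible in $[t_a,t_{a+1}]$ or the kinetic energy of $r_i$ stays below $T_{\text{clamp},i,j}$ for every $h_j\in S$. Singletons $S=\{h_j\}$ are already covered by $c_{\text{safe}}(\chix{M})$ inside~\eqref{eq:combined_body_constraint} via~\eqref{eq:complete_constraint}, so the plan is to treat $|S|\ge2$ and to reduce it to the new conjuncts over $\tilde{\Hs}$ in~\eqref{eq:combined_body_constraint}.

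The first step is to localize a multi-body clamp inside one connected component of $\mathbb{G}$. In a clamp the members of $S$ are squeezed in a chain from the surface of $r_i$ to the opposing rigid body (an environment element $e_k$ for an ECC, another link $r_l$ for an SCC), so consecutive members $h_j,h_{j'}$ of the chain are in physical contact; hence their realized positions, and a fortiori their reachable occupancies, intersect, $\Ohumja\cap\Ohumjb\neq\varnothing$, and because the clamp actually occurs, clamping of $h_j$ and $h_{j'}$ is possible, so $(h_j,h_{j'})\notin\hum{\mathcal{G}}$ and $\langle h_j,h_{j'}\rangle$ is an edge of $\mathbb{G}$. Thus $S$ induces a connected subgraph and lies in a single component $\Hs_{c,a}\in\Hs_c$ with $|\Hs_{c,a}|\ge|S|\ge2$, for which a combined body part $h_{\tilde{j}}$ with the properties~\eqref{eq:combined_body_parts_1}--\eqref{eq:combined_body_parts_3} has been built; cross-human clamps are handled automatically, since $\hum{\mathcal{G}}$ contains only same-human pairs, so overlapping occupancies of distinct humans always yield an edge. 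If $S$ is not contained in any component, this step already shows no clamp of exactly $S$ is possible and nothing remains to prove.

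The second step is a case split on~\eqref{eq:combined_body_constraint} for $h_{\tilde{j}}$ on $[t_a,t_{a+1}]$, which yields $\bar c_{\text{contact},i,\tilde{j}}\lor c_{\text{free},i,\tilde{j}}\lor c_{T,\text{clamp},i,\tilde{j}}$. If $\bar c_{\text{contact},i,\tilde{j}}$ holds, then $\humjc{\mathcal{O}}\cap\Orobia=\varnothing$ by~\eqref{eq:combined_body_parts_2}, so $\Ohumja\cap\Orobia=\varnothing$ for every $h_j\in S$ and a clamp of $S$ with $r_i$ is impossible. If $c_{\text{free},i,\tilde{j}}$ holds, I show the clamp is excluded: an ECC at $e_k$ would force $\bigl(\bigcup_{h_j\in S}\Ohumja\bigr)\cap\Oenvk\neq\varnothing$ and $\dist(\Orobia,\Oenvk)\le\sum_{h_j\in S}\humj{d}$ (the maximal extent of a chain of capsules of those diameters), whereas the relaxed ECC condition behind $c_{\text{free},i,\tilde{j}}$ supplies either $\humjc{\mathcal{O}}\cap\Oenvk=\varnothing$, which contradicts the first since $\bigcup_{h_j\in S}\Ohumja\subseteq\humjc{\mathcal{O}}$ by~\eqref{eq:combined_body_parts_2}, or $\dist(\Orobia,\Oenvk)>\humjc{d}=\sum_{h_j\in\Hs_{c,a}}\humj{d}\ge\sum_{h_j\in S}\humj{d}$ by~\eqref{eq:combined_body_parts_1}, which contradicts the second; the remaining relaxation terms (the robot receding from the clamping-relevant faces, topologically excluded link pairs) do not depend on the body part and exclude clamping uniformly, and the SCC case is identical with $\Orobla$ in place of $\Oenvk$. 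Finally, if $c_{T,\text{clamp},i,\tilde{j}}$ holds, then by~\eqref{eq:combined_body_parts_3} $\max\!\bigl(\robi{T}(\qb(t_a)),\robi{T}(\qb(t_{a+1}))\bigr)<T_{\text{clamp},i,\tilde{j}}=\min_{h_j\in\Hs_{c,a}}T_{\text{clamp},i,j}\le T_{\text{clamp},i,j}$ for every $h_j\in S$, so the energy of $r_i$ is below the clamping limit of every involved body part. In each case the clamp of $S$ is impossible or safe, and quantifying over $r_i$, the intervals, and $S$ (with $|S|=1$ handled by $c_{\text{safe}}$) completes the argument.

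I expect the main obstacle to be the geometric content underlying Steps~1 and~2: making rigorous that every physically realizable multi-body clamp is a chain of mutually touching body parts of combined extent at most $\sum_{h_j\in S}\humj{d}$, so that the summed diameter~\eqref{eq:combined_body_parts_1} and the union occupancy~\eqref{eq:combined_body_parts_2} are genuinely conservative and the minimum in~\eqref{eq:combined_body_parts_3} is the governing threshold. This is where the modeling assumptions — convex capsule representations of body parts, the polytopal environment model, and the encoding of geometrically impossible co-clampings in $\hum{\mathcal{G}}$ — carry the weight; the remaining monotonicity bookkeeping for $\humjc{\mathcal{O}}$, $\humjc{d}$, and $T_{\text{clamp},i,\tilde{j}}$ is routine.
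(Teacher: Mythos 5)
Your proof is correct and follows essentially the same route as the paper's: contact detection via the union occupancy \eqref{eq:combined_body_parts_2}, constrained-contact detection via the summed diameter \eqref{eq:combined_body_parts_1} together with the observation that clamped bodies must touch (and that the velocity/topology relaxations are body-part-independent), and energy safety via the minimum threshold \eqref{eq:combined_body_parts_3}. Your explicit localization of a multi-body clamp to a single connected component of $\mathbb{G}$ via the chain-of-touching-parts argument is a welcome elaboration of a step the paper leaves implicit in the construction, but it is not a different proof strategy.
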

\begin{proof}
We show that (a) all possible contacts between the robot and any combined body part are detected, (b) potential constrained contacts are always detected, and (c) the contact energy is below the thresholds for all body parts in the combined body part. Let $h_{\tilde{j}}$ be the combined body part constructed from the elements of $\Hs_{c, a} \in \Hs_c$.

\textit{Proving (a)} 
If and only if the reachable occupancy of a robot link intersects any reachable occupancy of any body in the combined body part, it intersects with the reachable occupancy of the combined body
\begin{align}\label{eq:multi-body-proof-a}
    &\exists h_{j'} \in \Hs_{c, a}: \humjb{\mathcal{O}}([t_a, t_b]) \cap \Orobia([t_a, t_b]) \neq \varnothing \nonumber\\
    & \quad \iff \humjc{\mathcal{O}}([t_a, t_b]) \cap \Orobia([t_a, t_b]) \neq \varnothing \, , 
\end{align}
due to the relation in \eqref{eq:combined_body_parts_2}, which directly proves (a).

\textit{Proving (b)} We first show that all possible ECCs and SCCs are detected.
If we replace $\Orobia([t_a, t_b])$ in \eqref{eq:multi-body-proof-a} with $\Oenvk$ for ECCs we directly show the validity of $\bar{c}_{\text{\{type\}}, i, \tilde{j}, k}([t_a, t_{b}])$.
The same applies for ECCs, where we replace $\Orobia([t_a, t_b])$ in \eqref{eq:multi-body-proof-a} with $\Orobla([t_a, t_b])$.
Since a set of human bodies can only be clamped if they touch, verifying the constraints in~\eqref{eq:ecc_single_2} and \eqref{eq:scc_single_2} against the diameter in~\eqref{eq:combined_body_parts_1} trivially ensures safety.
The robot velocity constraint in~\eqref{eq:ecc_single_3} and the robot topology constraint in~\eqref{eq:scc_single_4} are independent of the human body part, so safety remains guaranteed.
Therefore, the extended constraints for reducing conservatism in~\eqref{eq:ecc_2} and \eqref{eq:scc_2} still hold.

\textit{Proving (c)} 
As $\robisimp{T} < \min(T_{i, j}, T_{i, j'}) \iff \robisimp{T} < T_{i, j} \land \robisimp{T} < T_{i, j'}$, the contact energy threshold in \eqref{eq:combined_body_parts_3} ensures that no contact energy threshold of any body part in $\Hs_{c, a}$ is violated if $c_{T, \, \text{clamp}, i, \tilde{j}}$ is not violated.
\end{proof}

\section*{Acknowledgment}
\ifthenelse{\boolean{anon}}{
	Anonymous acknowledgement.
}{
	The authors gratefully acknowledge financial support by the Horizon 2020 EU Framework Project CONCERT under grant 101016007 and financial support by the DAAD program Konrad Zuse Schools of Excellence in Artificial Intelligence, sponsored by the Federal Ministry of Education and Research.
}

\ifCLASSOPTIONcaptionsoff
  \newpage
\fi

\bibliographystyle{IEEEtranN}
\bibliography{library_cleaned}

\ifthenelse{\boolean{anon}}{
	\begin{IEEEbiography}{Anonymous Author}
		Biography text here.
	\end{IEEEbiography}
}{
	\begin{IEEEbiography}{Jakob Thumm}
		Biography text here.
	\end{IEEEbiography}
	\begin{IEEEbiographynophoto}{Leonardo Maglanoc}
	Biography text here.
	\end{IEEEbiographynophoto}
	
	\begin{IEEEbiographynophoto}{Julian Balletshofer}
	Biography text here.
	\end{IEEEbiographynophoto}

	\begin{IEEEbiographynophoto}{Matthias Althoff}
	Biography text here.
	\end{IEEEbiographynophoto}
}

\end{document}